\documentclass{article}

\PassOptionsToPackage{numbers, compress, sort}{natbib}

 \usepackage[preprint]{neurips_2026}

\usepackage[utf8]{inputenc} 
\usepackage[T1]{fontenc}    
\usepackage{hyperref}       
\usepackage{url}            
\usepackage{booktabs}       
\usepackage{amsmath}
\usepackage{tcolorbox}
\usepackage{amssymb}
\usepackage{mathtools}
\usepackage{amsthm}
\usepackage{nicefrac}       
\usepackage{microtype}      
\usepackage{xcolor}         
\usepackage{algorithm}
\usepackage{algpseudocode}
\usepackage{subcaption}
\usepackage{placeins}

\newtheorem{proposition}{Proposition}

\newcommand{\propref}[1]{\textcolor{gray}{\scriptsize Prop.~\ref{#1}}}

\newcommand{\agg}{\mathop{\Phi}\limits}
\newcommand{\aggX}{\mathop{\Phi_\mathcal{X}}\limits}
\newcommand{\aggY}{\mathop{\Phi_\mathcal{Y}}\limits}
\newcommand{\aggS}{\mathop{\Phi_\mathcal{S}}\limits}

\newcommand{\KL}{\mathrm{d}_{\mathrm{KL}}} 
\newcommand{\Pemp}{P_n} 
\newcommand{\Pref}{P_{\mathrm{ref}}}      
\newcommand{\U}{\mathcal{U}} 
\newcommand{\Hspace}{\mathcal{H}} 

\DeclareMathOperator*{\argmin}{arg\,min} 

\newtheorem{remark}{Remark}

\usepackage{todonotes}

\usepackage{tikz}
\usetikzlibrary{arrows.meta, positioning, fit, backgrounds, calc}

\usepackage{amsmath,amssymb}

\usepackage{colortbl}   
\usepackage{array}      

\definecolor{clGreen}{HTML}{d5e8d4}
\definecolor{clGreenBorder}{HTML}{82b366}
\definecolor{clGreenLight}{HTML}{f9fff9}
\definecolor{clOrange}{HTML}{ffe6cc}
\definecolor{clOrangeBorder}{HTML}{d6a021}
\definecolor{clOrangeLight}{HTML}{fff9f0}
\definecolor{clPurple}{HTML}{e1d5e7}
\definecolor{clPurpleBorder}{HTML}{9673a6}
\definecolor{clPurpleLight}{HTML}{f8f0ff}
\definecolor{clBlue}{HTML}{dae8fc}
\definecolor{clBlueBorder}{HTML}{6c8ebf}
\definecolor{clBlueLight}{HTML}{f0f7ff}
\definecolor{clGray}{HTML}{f5f5f5}
\definecolor{clGrayBorder}{HTML}{666666}
\definecolor{clNovel}{HTML}{FF8000}
\definecolor{clNovelLight}{HTML}{fff8f0}

\tikzset{
  every node/.style={font=\small},
  box/.style={
    draw, rounded corners=3pt,
    text width=#1,
    align=center,
    inner sep=5pt,
    minimum height=1.6em
  },
  header/.style={box=#1, thick, font=\small\bfseries,minimum height=1.5cm},
  sub/.style={box=#1, dashed, font=\small},
  iobox/.style={box=2.0cm, draw=clGrayBorder, fill=clGray, thick, align=center,minimum height=1.5cm},
  arr/.style={-Stealth, thick, gray!70!black},
}

\title{Unification and Optimization of\\ Robust Supervised Learning}

\author{%
  Jonas Hanselle\thanks{equal contribution} \\
  LMU Munich, MCML\\
  \texttt{jonas.hanselle@ifi.lmu.de} \\
  \And
  Valentin Margraf\footnotemark[1] \\
  LMU Munich, MCML\\
  \texttt{valentin.margraf@ifi.lmu.de} \\
  \And
  Clemens Damke \\
  LMU Munich, MCML\\
  \texttt{clemens.damke@ifi.lmu.de} \\
  \And
  Eyke Hüllermeier \\
  LMU Munich, MCML, DFKI\\
  \texttt{eyke@lmu.de} \\
}

\begin{document}

\maketitle

\begin{abstract}
The literature has proposed various robust alternatives to empirical risk minimization to address failure modes such as distribution shift, label noise and finite-sample degeneracies. Examples include distributionally robust optimization, label smoothing, vicinal risk minimization, and Mixup. However, such approaches are typically developed in isolation, forcing practitioners to commit a priori to a single failure mode even when the dominant mode for the task is unclear. To address this, we organize a broad class of existing methods along three common design axes and derive a tractable training procedure that decomposes robust learning into sequential stages (reference distribution enrichment, input-space perturbation, label-space perturbation, and sample-level aggregation), each with a choice of stance (pessimistic, neutral, or optimistic). 
This results in a unified design space in which joint hyperparameter optimization can compose and configure robustness strategies suited to the task at hand. Across tabular, image, and reward modeling benchmarks, joint hyperparameter optimization is competitive with the best single-method baseline in each setting, offering a reliable default for practitioners who do not know a priori which failure mode dominates their task. 

\end{abstract}

\section{Introduction}
\label{sec:introduction}
A typical supervised learning process involves a sequence of decisions under uncertainty.
Training data is collected, labels are assigned, a model is trained, and the resulting predictor is deployed into an environment the practitioner can rarely fully anticipate.
At any of these stages, a failure can occur. For instance, consider reward modeling from human preferences, a central component in aligning modern language models \cite{ziegler-arxiv19a,kaufmannsurvey}. 
A practitioner building such a reward model faces multiple, potentially interacting sources of imperfection simultaneously. First, training data is finite and often unrepresentative: annotators cover only a small portion of the prompt space, leaving large regions unobserved~\cite{stiennon-neurips2020a}. Second, labels are inherently noisy and subjective: different annotators apply different criteria, and even the same annotator may be inconsistent over time~\cite{JEONG2023102992}. Third, deployment conditions differ from the annotation setting: models are evaluated on prompts that vary in topic, style, language, and implicit standards.


Each of these failure modes is well-studied in isolation, and a wide range of methods
have been proposed under the umbrella term of robust supervised learning. Finite-sample degeneracy is mitigated by enriching the training distribution in input space via data augmentation and vicinal methods~\cite{chapelleVicinalRiskMinimization2000,krizhevsky-nips12a,zhangMixupEmpiricalRisk2018,yun-iccv19a,hendrycks2020augmix}. Label uncertainty is addressed through softening or relaxing labels, e.g.\ label smoothing~\cite{szegedyRethinkingInceptionArchitecture2016} and label relaxation~\cite{DBLP:conf/aaai/LienenH21}. Distribution shift is handled through adversarial training~\cite{goodfellowadv,madry2018towards} and distributionally robust optimization (DRO)~\cite{blanchetDistributionallyRobustOptimization2025,sinha2018certifying}, as well as its optimistic counterpart, distributionally favorable optimization (DFO)~\cite{jiangDistributionallyFavorableOptimization2024a}. Despite their diversity, these approaches share a common goal: learning predictors that generalize reliably under uncertainty. However, they differ along several implicit modeling choices—how the data distribution is represented, what deviations are considered plausible, and how uncertainty is resolved. Prior work entangles these choices within specific objectives, making
it difficult to compare methods systematically and forcing practitioners
to commit a priori to a particular failure mode, even when the dominant
mode for the task at hand is unclear. Yet, mechanisms targeting different
failure modes can compose to outperform any single one when multiple
modes coexist, as illustrated in Figure~\ref{fig:synergy_example} on the
two-moons example.

Our work aims to relieve practitioners of the burden to commit to a single
robustness mechanism a priori. To this end, we introduce a unified framework that
decomposes robust supervised learning into three orthogonal design
axes---a \emph{reference distribution}, an \emph{ambiguity set}, and a
\emph{disambiguation principle}---and a tractable training procedure
that exposes them as continuous hyperparameters of a single objective.
This enables joint hyperparameter optimization to discover the right
configuration of robustness mechanisms for a given task. Our \textbf{contributions} can be summarized as follows:

\begin{itemize}

    \item \textbf{Unified framework.}
    We introduce a general formulation of robust supervised learning spanning three explicit modeling axes: (i) the reference distribution, (ii) the ambiguity set, and (iii) the disambiguation principle, which recovers many existing methods as special cases.
    
    \item \textbf{Theoretical connections.}
    We give closed-form reductions of vicinal risk minimization (VRM), Mixup, and W-DRO to modified
    ERM objectives for linear models with squared loss, clarifying the
    qualitative relations between these mechanisms.

\item \textbf{Tractable training procedure.}
    We derive a single training pipeline in which all design
    choices appear as continuous hyperparameters of
    a single objective, enabling joint hyperparameter optimization
    across the entire space and end-to-end training via
    backpropagation.
        \item \textbf{Practical applicability.}   
    Across tabular, vision, and reward-modeling tasks, joint optimization delivers consistent gains on suitable metrics and the largest improvements on reward modeling, hence offering a reliable default when the dominant failure mode is unknown and committing to a single mechanism is risky.
\end{itemize}

\begin{figure}
    \centering
    \includegraphics[width=0.95\linewidth]{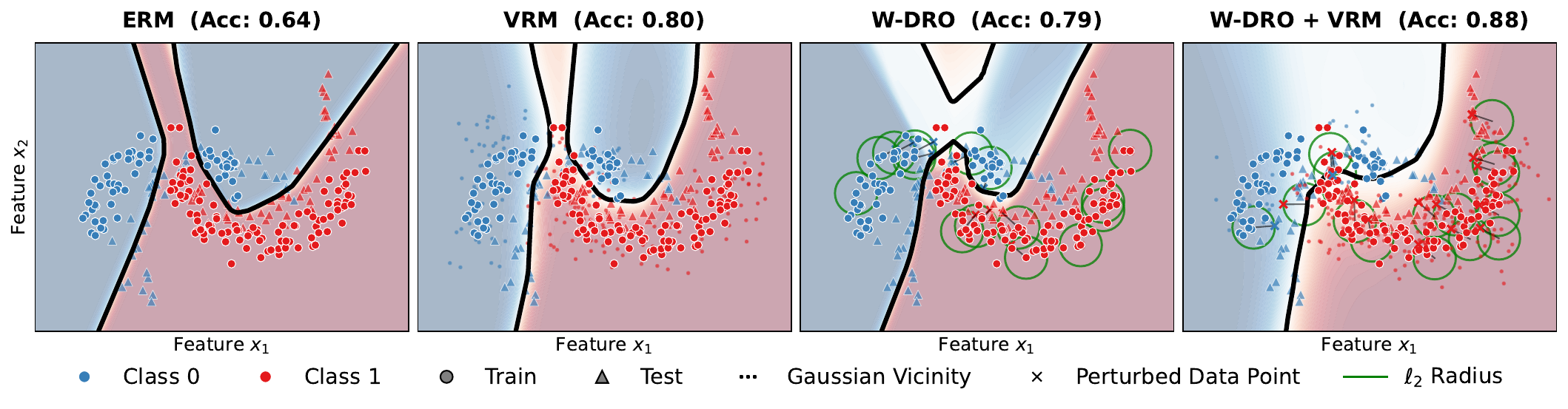}
    \caption{Two-moons classification with a gap in class 0 and a shifted test distribution (triangles). 
\emph{VRM} enriches $P_{\mathrm{ref}}$ with virtual samples (dots); \emph{W-DRO} perturbs points 
toward the decision boundary (crosses). Their \emph{combination} produces a decision boundary that generalizes best.}
    \label{fig:synergy_example}
\end{figure}

\section{Robust Supervised Learning}
\label{sec:robust_learning}

In supervised learning, we consider an input space $\mathcal{X}$, 
an output space $\mathcal{Y}$, and assume that data is sampled from
$P^\star \in \mathcal{P}(\mathcal{X} \times \mathcal{Y})$, 
where $\mathcal{P}(\mathcal{X} \times \mathcal{Y})$ denotes the set of all probability measures on $\mathcal{X} \times \mathcal{Y}$. Given a hypothesis class 
$\Hspace$ and a loss function $\ell : \mathcal{Y} \times 
\mathcal{Y} \to \mathbb{R}$, the goal is to find a hypothesis 
$h \in \Hspace$ minimizing the population risk,
\[
h^\star
= \argmin_{h \in \Hspace} R(h; P^\star)
= \argmin_{h \in \Hspace} \mathbb{E}_{(x,y)\sim P^\star}
[\ell(h(x), y)].
\]
Since $P^\star$ is usually unknown, one relies on a dataset $\mathcal{D} = \{(x_i, y_i)\}_{i=1}^n$ drawn i.i.d.\ from 
$P^\star$ and constructs the empirical distribution
\(
\Pemp := \frac{1}{n}\sum_{i=1}^n \delta_{(x_i,y_i)},
\)
where $\delta_{(x,y)}$ denotes the Dirac measure at $(x,y)$. Empirical risk 
minimization (ERM) then selects a hypothesis 
\[
\hat{h}_{\mathrm{ERM}}
= \argmin_{h \in \Hspace} R(h; \Pemp)
= \argmin_{h \in \Hspace} \frac{1}{n}\sum_{i=1}^n
\ell(h(x_i), y_i).
\]
ERM can be suboptimal in practice for several 
reasons~\citep{blanchetDistributionallyRobustOptimization2025}. 
First, $\Pemp$ is a poor 
proxy for $P^\star$ when data are scarce: it is a degenerate discrete measure that assigns zero mass to unobserved regions of the input space. Second, observed samples may deviate from $P^\star$ due to noise, annotation errors, or contamination. Third, model performance is ultimately evaluated under a deployment distribution $P_{\mathrm{deploy}}$ that may differ from $P^\star$ due to, e.g. distributional shift or adversarial attacks. Together, these issues motivate 
learning paradigms that go beyond ERM. In the following, we recall 
existing methods relevant to this work, organized by the 
failure mode they address.

\paragraph{Finite-sample degeneracy.}

To mitigate the degeneracy of $\Pemp$, methods such as vicinal risk minimization (VRM)~\citep{chapelleVicinalRiskMinimization2000} and Mixup~\citep{zhangMixupEmpiricalRisk2018} enrich the training distribution by introducing synthetic samples. VRM places a local vicinity distribution around each training point, while Mixup interpolates globally between pairs of examples. More broadly, a large body of work develops modality-specific data augmentation strategies, e.g. CutMix~\citep{yun-iccv19a} and AugMix~\citep{hendrycks2020augmix} for image data.
\paragraph{Label noise and miscalibration.}
When labels are unreliable, robustness can be achieved by modifying the label distribution. Label smoothing~\citep{szegedyRethinkingInceptionArchitecture2016} redistributes probability mass away from the annotated class, improving calibration~\citep{mueller-neurips2019a}. Label relaxation~\citep{DBLP:conf/aaai/LienenH21} replaces hard labels with sets of admissible distributions, allowing multiple predictions to incur zero loss. Crucially, these methods act on the label space while leaving the input distribution unchanged.

\paragraph{Distribution shift.}
When the deployment distribution differs from the training distribution, robustness requires accounting for plausible shifts. At the instance level, adversarial training~\citep{goodfellowadv,madry2018towards,wang-MART,zhang_TRADES,robey-DALE} enforces robustness to local perturbations. At the distribution level, distributionally robust optimization (DRO)~\citep{blanchetDistributionallyRobustOptimization2025,sinha2018certifying} guards against worst-case shifts that may arise \emph{after} deployment (e.g., covariate shift or adversarial perturbations). In contrast, distributionally favorable optimization (DFO)~\citep{jiangDistributionallyFavorableOptimization2024a} adopts an optimistic perspective, targeting shifts that occurred \emph{before} the decision (e.g., label noise or sample contamination) and may be partially correctable~\citep{blanchetDistributionallyRobustOptimization2025}. Intermediate approaches, such as probabilistically robust learning~\citep{proba_robust_robey22}, interpolate between these regimes.


\noindent Despite targeting different failure modes of ERM, the methods above ultimately aim at learning predictors that generalize reliably to deployment data. However, they achieve robustness through different formulations, making it difficult to compare approaches, combine their strengths, or tune them in a coherent manner. A unified framework that makes these assumptions explicit is therefore needed.

\section{A Unified Framework}
\label{sec:framework}

\begin{figure}[t]
  \centering
  \resizebox{\linewidth}{!}{\begin{tikzpicture}[scale=1.0, every node/.style={font=\small}]

\definecolor{empColor}{RGB}{100,100,100}
\definecolor{refColor}{RGB}{100,150,200}
\definecolor{ambColor}{RGB}{150,100,200}
\definecolor{worstColor}{RGB}{200,50,50}
\definecolor{bestColor}{RGB}{50,150,50}

\begin{scope}[xshift=0cm]
  \node[font=\bfseries] at (2, 2.5) {Empirical};
  \node[font=\small] at (2, 2.05) {$\Pemp$};
  
  \draw[->] (0, 0) -- (4, 0) node[right, font=\footnotesize] {$x$};
  \draw[->] (0, 0) -- (0, 1.8) node[above, font=\footnotesize] {$p$};
  
  \foreach \x in {0.75, 1.05, 1.15} {
    \draw[empColor, very thick] (\x, 0) -- (\x, 1.2);
    \fill[empColor] (\x, 1.2) circle (2.5pt);
  }
  
  \foreach \x in {2.55, 2.85, 3.1} {
    \draw[empColor, very thick] (\x, 0) -- (\x, 1.2);
    \fill[empColor] (\x, 1.2) circle (2.5pt);
  }
  
\end{scope}

\draw[-Stealth, thick, gray!70] (4.3, 0.9) -- (5.2, 0.9);
\node[above, font=\footnotesize, align=center] at (4.75, 1.05) {augment/};
\node[below, font=\footnotesize, align=center] at (4.75, 0.75) {smooth};

\begin{scope}[xshift=5.5cm]
  \node[font=\bfseries] at (2, 2.5) {Reference};
  \node[font=\small] at (2, 2.05) {$P_{\mathrm{ref}}$};
  
  \draw[->] (0, 0) -- (4, 0) node[right, font=\footnotesize] {$x$};
  \draw[->] (0, 0) -- (0, 1.8) node[above, font=\footnotesize] {$p$};
  
  \draw[refColor, very thick, fill=refColor!30] 
    plot[smooth, tension=0.7] coordinates {
      (0.3, 0) (0.6, 0.18) (0.75, 0.54) (1.0, 1.08) (1.2, 0.96) 
      (1.5, 0.48) (1.8, 0.36) (2.1, 0.42) (2.4, 0.66) (2.6, 0.9)
      (2.85, 1.2) (3.1, 1.08) (3.3, 0.6) (3.5, 0.18) (3.8, 0)
    } -- (0.3, 0);
  
  \foreach \x in {0.75, 1.05, 1.15, 2.55, 2.85, 3.1} {
    \fill[empColor] (\x, -0.15) circle (1.5pt);
  }
\end{scope}

\draw[-Stealth, thick, gray!70] (9.8, 0.9) -- (10.7, 0.9);
\node[above, font=\footnotesize, align=center] at (10.25, 0.9) {perturb};

\begin{scope}[xshift=11cm]
  \node[font=\bfseries] at (2, 2.5) {Ambiguity Set};
  \node[font=\small] at (2, 2.05) {$\U(P_{\mathrm{ref}})$};
  
  \draw[->] (0, 0) -- (4, 0) node[right, font=\footnotesize] {$x$};
  \draw[->] (0, 0) -- (0, 1.8) node[above, font=\footnotesize] {$p$};
  
  \draw[refColor!50, thick, opacity=0.7] 
    plot[smooth, tension=0.7] coordinates {
      (0.3, 0) (0.6, 0.18) (0.75, 0.54) (1.0, 1.08) (1.2, 0.96) 
      (1.5, 0.48) (1.8, 0.36) (2.1, 0.42) (2.4, 0.66) (2.6, 0.9)
      (2.85, 1.2) (3.1, 1.08) (3.3, 0.6) (3.5, 0.18) (3.8, 0)
    };
  
  \draw[ambColor!70, very thick, opacity=0.7] 
    plot[smooth, tension=0.7] coordinates {
      (0.5, 0) (0.8, 0.18) (0.95, 0.54) (1.2, 1.08) (1.4, 0.96) 
      (1.7, 0.48) (2.0, 0.36) (2.3, 0.42) (2.6, 0.66) (2.8, 0.9)
      (3.05, 1.2) (3.3, 1.08) (3.5, 0.6) (3.7, 0.18)
    };
  
  \draw[ambColor!70, very thick, opacity=0.7] 
    plot[smooth, tension=0.7] coordinates {
      (0.1, 0) (0.4, 0.18) (0.55, 0.54) (0.8, 1.08) (1.0, 0.96) 
      (1.3, 0.48) (1.6, 0.36) (1.9, 0.42) (2.2, 0.66) (2.4, 0.9)
      (2.65, 1.2) (2.9, 1.08) (3.1, 0.6) (3.3, 0.18) (3.6, 0)
    };
  
  \draw[ambColor!70, very thick, opacity=0.7] 
    plot[smooth, tension=0.7] coordinates {
      (0.3, 0) (0.6, 0.24) (0.75, 0.66) (1.0, 1.32) (1.2, 1.2) 
      (1.5, 0.6) (1.8, 0.48) (2.1, 0.54) (2.4, 0.84) (2.6, 1.14)
      (2.85, 1.5) (3.1, 1.32) (3.3, 0.78) (3.5, 0.24) (3.8, 0)
    };
  
  \draw[ambColor!70, very thick, opacity=0.7] 
    plot[smooth, tension=0.7] coordinates {
      (0.3, 0) (0.6, 0.12) (0.75, 0.36) (1.0, 0.72) (1.2, 0.66) 
      (1.5, 0.3) (1.8, 0.24) (2.1, 0.3) (2.4, 0.42) (2.6, 0.6)
      (2.85, 0.78) (3.1, 0.72) (3.3, 0.42) (3.5, 0.12) (3.8, 0)
    };
  
  \draw[ambColor!70, very thick, opacity=0.7] 
    plot[smooth, tension=0.7] coordinates {
      (0.8, 0) (1.0, 0.3) (1.15, 0.84) (1.3, 1.2) (1.45, 1.08) 
      (1.7, 0.6) (1.95, 0.48) (2.2, 0.54) (2.45, 0.78) (2.65, 1.02)
      (2.9, 1.32) (3.1, 1.14) (3.3, 0.54) (3.5, 0)
    };
  
  \draw[ambColor!70, very thick, opacity=0.7] 
    plot[smooth, tension=0.7] coordinates {
      (0.1, 0) (0.35, 0.12) (0.5, 0.36) (0.75, 0.78) (1.0, 0.72) 
      (1.35, 0.36) (1.65, 0.3) (1.95, 0.3) (2.25, 0.48) (2.5, 0.66)
      (2.8, 0.9) (3.1, 0.84) (3.4, 0.48) (3.7, 0.12)
    };
\end{scope}

\draw[-Stealth, thick, gray!70] (15.3, 0.9) -- (16.2, 0.9);
\node[above, font=\footnotesize, align=center] at (15.75, 0.9) {select};

\begin{scope}[xshift=16.5cm]
  \node[font=\bfseries] at (2, 2.5) {Disambiguation};
  \node[font=\small] at (2, 2.05) {$\agg_{Q \in \U(P_{\mathrm{ref}})}$};
  
  \draw[->] (0, 0) -- (4, 0) node[right, font=\footnotesize] {$x$};
  \draw[->] (0, 0) -- (0, 1.8) node[above, font=\footnotesize] {$p$};
  
  \draw[refColor!30, thick, opacity=0.5] 
    plot[smooth, tension=0.7] coordinates {
      (0.3, 0) (0.6, 0.18) (0.75, 0.54) (1.0, 1.08) (1.2, 0.96) 
      (1.5, 0.48) (1.8, 0.36) (2.1, 0.42) (2.4, 0.66) (2.6, 0.9)
      (2.85, 1.2) (3.1, 1.08) (3.3, 0.6) (3.5, 0.18) (3.8, 0)
    };
  
  \draw[ambColor!50, thick, opacity=0.5] 
    plot[smooth, tension=0.7] coordinates {
      (0.1, 0) (0.4, 0.18) (0.55, 0.54) (0.8, 1.08) (1.0, 0.96) 
      (1.3, 0.48) (1.6, 0.36) (1.9, 0.42) (2.2, 0.66) (2.4, 0.9)
      (2.65, 1.2) (2.9, 1.08) (3.1, 0.6) (3.3, 0.18) (3.6, 0)
    };
  
  \draw[ambColor!50, thick, opacity=0.5] 
    plot[smooth, tension=0.7] coordinates {
      (0.3, 0) (0.6, 0.24) (0.75, 0.66) (1.0, 1.32) (1.2, 1.2) 
      (1.5, 0.6) (1.8, 0.48) (2.1, 0.54) (2.4, 0.84) (2.6, 1.14)
      (2.85, 1.5) (3.1, 1.32) (3.3, 0.78) (3.5, 0.24) (3.8, 0)
    };
  
  \draw[ambColor!50, thick, opacity=0.5] 
    plot[smooth, tension=0.7] coordinates {
      (0.3, 0) (0.6, 0.12) (0.75, 0.36) (1.0, 0.72) (1.2, 0.66) 
      (1.5, 0.3) (1.8, 0.24) (2.1, 0.3) (2.4, 0.42) (2.6, 0.6)
      (2.85, 0.78) (3.1, 0.72) (3.3, 0.42) (3.5, 0.12) (3.8, 0)
    };
  
  \draw[ambColor!50, thick, opacity=0.5] 
    plot[smooth, tension=0.7] coordinates {
      (0.8, 0) (1.0, 0.3) (1.15, 0.84) (1.3, 1.2) (1.45, 1.08) 
      (1.7, 0.6) (1.95, 0.48) (2.2, 0.54) (2.45, 0.78) (2.65, 1.02)
      (2.9, 1.32) (3.1, 1.14) (3.3, 0.54) (3.5, 0)
    };
  
  \draw[ambColor!50, thick, opacity=0.5] 
    plot[smooth, tension=0.7] coordinates {
      (0.1, 0) (0.35, 0.12) (0.5, 0.36) (0.75, 0.78) (1.0, 0.72) 
      (1.35, 0.36) (1.65, 0.3) (1.95, 0.3) (2.25, 0.48) (2.5, 0.66)
      (2.8, 0.9) (3.1, 0.84) (3.4, 0.48) (3.7, 0.12)
    };
  
  \draw[worstColor, very thick, fill=worstColor!30] 
    plot[smooth, tension=0.7] coordinates {
      (0.5, 0) (0.8, 0.18) (0.95, 0.54) (1.2, 1.08) (1.4, 0.96) 
      (1.7, 0.48) (2.0, 0.36) (2.3, 0.42) (2.6, 0.66) (2.8, 0.9)
      (3.05, 1.2) (3.3, 1.08) (3.5, 0.6) (3.7, 0.18)
    } -- (3.7, 0) -- (0.5, 0);
\end{scope}

\end{tikzpicture}}\caption{Our three-axis framework (i) enriches the empirical distribution $\Pemp$ into a \emph{reference distribution} $P_{\mathrm{ref}}$, (ii) defines an \emph{ambiguity set}
$\U(P_{\mathrm{ref}})$ of plausible perturbations of $P_{\mathrm{ref}}$,
and (iii) resolves the resulting uncertainty via a
\emph{disambiguation principle} $\agg$.}
\label{fig:framework_illustration}
\end{figure}

Each robustness method reviewed above can be understood as making a set of modeling choices, often implicitly. To enable systematic comparison and combination, we make these choices explicit and organize them into a unified framework. Concretely, we decompose robust learning into three independent design dimensions: (i) the choice of a \emph{reference distribution}, (ii) the specification of \emph{plausible deviations} from that reference, and (iii) the \emph{principle} used to resolve the resulting uncertainty into a single training objective. This decomposition isolates the core assumptions underlying existing approaches and enables systematic comparison and combination.

For instance, Wasserstein-DRO uses the empirical distribution $\Pemp$ as a reference, defines plausible deviations as a Wasserstein ball of radius $\varrho$ around it, and resolves uncertainty by taking the worst case over that ball \cite{kuhnWassersteinDistributionallyRobust2019}. This however is one specific choice along each of three axes. We treat those as independent design dimensions (cf. Figure~\ref{fig:framework_illustration}) and formalize below.
\begin{enumerate}
    \item \textbf{Reference distribution $P_{\mathrm{ref}}$:} 
    The distribution around which perturbations are defined. 
    This can be the empirical distribution $\Pemp$, or a smoothed 
    or augmented variant thereof.

    \item \textbf{Ambiguity set $\U(P_{\mathrm{ref}})$:}
    A set of plausible distributions capturing uncertainty in 
    inputs, labels, or both, defined via a distribution distance $\Delta$
    and a perturbation radius $\varrho$. The source of uncertainty 
    can be separately structured across input and label 
    spaces:
    \[
    \U(\Pref)
    = \bigl\{ Q \in \mathcal{P}(\mathcal{X}\times\mathcal{Y}) :
    \Delta(Q,\, P_{\mathrm{ref}}) \le \varrho \bigr\}.
    \]
    Common choices for $\Delta$ include the $p$-Wasserstein distance $W_p$, which measures the cost for geometric displacement of mass between $Q$ and $\Pref$ \cite{kuhnWassersteinDistributionallyRobust2019}, and the KL divergence, which measures re-weighting of $\Pref$'s support and is therefore restricted to $Q \ll \Pref$ \cite{huKullbackLeiblerDivergenceConstrained2013a}.

    \item \textbf{Disambiguation principle $\agg$:} 
    A rule encoding the practitioner's stance toward the uncertainty captured by $\U(\Pref)$. If the practitioner believes deviations represent contamination, like label noise, corrupted samples, etc., an optimistic stance is appropriate, seeking the best case risk within the ambiguity set.
    If deviations are expected to occur after deployment, like adversarial perturbations, covariate shift, domain change, etc., a pessimistic stance is warranted, hedging against the worst case.
\end{enumerate}

Together, these three axes specify an explicit robust learning problem: the reference $P_{\mathrm{ref}}$ anchors the ambiguity set $\U(P_{\mathrm{ref}})$, which defines the distributions over which the disambiguation principle $\agg$ is taken. We define the general robust objective as follows:

\begin{equation}
    \hat{h}_{\text{robust}} = \argmin_{h \in \Hspace} 
    \agg_{Q \in \U(P_{\mathrm{ref}})} 
    \mathbb{E}_{(x,y)\sim Q}[\ell(h(x), y)] 
    \label{eq:robust_framework}
\end{equation}
Three observations help orient how this objective subsumes existing methods. First, ERM is the special case $\Pref = \Pemp$, $\varrho = 0$: the ambiguity set collapses, the aggregation is trivial, and~\eqref{eq:robust_framework} reduces to empirical risk. Second, methods that act through the reference (label smoothing, VRM, Mixup) correspond to $\varrho = 0$ with a non-trivial $\Pref$, so the singleton case is not a special encoding but the natural limit $\varrho \to 0$. Third, methods that act through the ambiguity set (DRO, DFO) keep $\Pref = \Pemp$ and vary $\agg \in \{\sup, \inf\}$ over the ambiguity set, differing only in their stance toward uncertainty. Table~\ref{tab:special-cases} presents these recoveries; formal statements and proofs are in Appendix~\ref{apx:proofs_recovery}.

\begin{remark}[Structured and factorized ambiguity sets]
The framework places no restriction on the form of $\, \U(\Pref)$. 
Of particular practical interest is the case where the uncertainty decomposes over the input and label space and the sampling distribution.
In this case, a structured ambiguity set across the spaces with distinct distances and radii for each is applicable.
The disambiguation principle then decomposes as $\aggX \circ \aggY \circ \aggS$, permitting different stances toward covariate shift, label noise, and sample corruption simultaneously. This is the basis for the training procedure in Section~\ref{sec:algorithm}.
\end{remark}

\begin{table}[t]
\centering
\caption{Existing methods can be recovered as special cases of the unified robust framework~\eqref{eq:robust_framework}, each defined by a specific choice of reference distribution~$P_{\mathrm{ref}}$, ambiguity set~$\mathcal{U}(P_{\mathrm{ref}})$, and disambiguation principle $\Phi$. Detailed propositions and proofs are provided in Appendix~\ref{apx:proofs_recovery}.}
\label{tab:special-cases}
\vskip 0.1in
\centering\resizebox{\textwidth}{!}{%
\begin{tabular}{lcccc}
\toprule
\textbf{Method} & \textbf{Reference} $P_{\mathrm{ref}}$ & \textbf{Ambiguity set} $\U(P_{\mathrm{ref}})$ & \textbf{Disambig. principle} $\Phi$ & \\
\midrule
Label smoothing \citep{szegedyRethinkingInceptionArchitecture2016}
  & Smoothed $\tilde{P}_{\mathrm{ls}}$
  & Singleton $\{\tilde{P}_{\mathrm{ls}}\}$
  & - 
  & \propref{prop:label-smoothing} \\[2pt]
Label relaxation \citep{DBLP:conf/aaai/LienenH21}
  & Empirical $\Pemp$
  & TV ball over labels, radius $\alpha$
  & $\inf$
  & \propref{prop:label-relaxation} \\[2pt]
DRO \citep{blanchetDistributionallyRobustOptimization2025} 
  & Empirical $\Pemp$
  & Statistical ball (e.g.\ Wasserstein), radius $\varrho$
  & $\sup$
  & \propref{prop:dro} \\[2pt]
DFO \citep{jiangDistributionallyFavorableOptimization2024a}
  & Empirical $\Pemp$
  & Statistical ball, radius $\varrho$
  & $\inf$
  & \propref{prop:dfo} \\[2pt]
Mixup \citep{zhangMixupEmpiricalRisk2018}
  & Mixup $\tilde{P}_{\mathrm{mix}}$
  & Singleton $\{\tilde{P}_{\mathrm{mix}}\}$
  & - 
  & \propref{prop:mixup} \\[2pt]
VRM \citep{chapelleVicinalRiskMinimization2000}
  & Vicinal $\tilde{P}_{\mathrm{vic}}$
  & Singleton $\{\tilde{P}_{\mathrm{vic}}\}$
  & - 
  & \propref{prop:vrm} \\
\bottomrule
\end{tabular}}
\end{table}

\subsection{Regularization for Linear Models}
\label{sec:robustness_as_regularization}

For linear models $h_w(x) = w^\top x$ trained with the squared loss
$\ell(h_w(x), y) = (y - w^\top x)^2$, covering both linear
regression and binary classification with $y \in \{-1, +1\}$
(least-squares classification), several methods recovered by the
framework reduce to ERM with an explicit regularizer whose forms are
qualitatively distinct. We make this precise below; full proofs are
given in Appendix~\ref{apx:proofs_recovery}. We assume linear models
include an intercept, incorporated implicitly by augmenting features
with a constant.


Vicinal risk minimization with Gaussian noise ($\tilde{x} = x + \epsilon$,
$\epsilon \sim \mathcal{N}(0, \sigma^2 I)$) induces the following regularizer~\cite[Section 3a]{chapelleVicinalRiskMinimization2000}:
\begin{equation}
R_{\mathrm{VRM}}(w) = R_{\mathrm{ERM}}(w) + \sigma^2 \|w\|^2.
\tag{Props.~\ref{prop:special_case_vrm_regression}, \ref{prop:special_case_vrm_classification}}
\end{equation}

Mixup has a different effect. Let $\bar{x} = \tfrac{1}{n}\sum_{i=1}^n x_i$ and $\bar{y} = \tfrac{1}{n}\sum_{i=1}^n y_i$ denote the empirical feature and label means. A direct calculation gives the deterministic identity
\begin{equation}
R_{\mathrm{Mixup}}(w)
  = (1 - 2c_\alpha)\, R_{\mathrm{ERM}}(w)
  + 2c_\alpha\, (\bar{y} - w^\top \bar{x})^2,
\tag{Props.~\ref{prop:special_case_mixup_regression}, \ref{prop:special_case_mixup_classification}}
\end{equation}
with $c_\alpha = \mathbb{E}[\lambda(1-\lambda)]$ for $\lambda \sim \mathrm{Beta}(\alpha, \alpha)$. The added term $(\bar{y} - w^\top \bar{x})^2$ vanishes at the ERM minimizer (whose normal equations imply $w^\top \bar{x} = \bar{y}$), so Mixup leaves the linear least-squares solution unchanged, which recovers the observation of~\citet[Corollary 6]{CarratinoCJV22}.

W-DRO induces a regularizer that is linear in $\|w\|$ rather than quadratic. The identity is asymptotic, obtained from the small-radius first-order expansion of the Wasserstein dual~\citep[Theorem 1]{gao}, which replaces the distributional supremum by an instance-wise gradient-norm surrogate:
\begin{equation}
R_{\mathrm{DRO}}(w) = R_{\mathrm{ERM}}(w)
  + 2\varrho\,\|w\|_{q^*}\cdot \frac{1}{n}\sum_{i=1}^n |y_i - w^\top x_i|
  + o(\varrho),
\tag{Props.~\ref{prop:special_case_dro_regression}, \ref{prop:special_case_dro_classification}}
\end{equation}
where $\|w\|_{q^*}$ is the dual norm to the Wasserstein cost and controls the scale of the weight.

Label smoothing additionally admits a clean closed-form
characterization in the classification setting. With smoothing parameter
$\alpha_{\mathrm{LS}} \in [0,1]$ and softened targets
$\tilde y_i = (1-\alpha_{\mathrm{LS}})\, y_i$,
\begin{equation}
R_{\mathrm{LS}}(w)
  = (1-\alpha_{\mathrm{LS}})^2\, R_{\mathrm{ERM}}\!\big(w/(1-\alpha_{\mathrm{LS}})\big),
\quad\text{i.e.}\quad
w_{\mathrm{LS}}^{\star} = (1-\alpha_{\mathrm{LS}})\, w_{\mathrm{ERM}}^{\star}.
\tag{Prop.~\ref{prop:special_case_ls_classification}}
\end{equation}
Label smoothing thus does not change the direction of the ERM solution, it only rescales its magnitude by the factor $1-\alpha_{\mathrm{LS}}$.

Thus, the four methods behave very differently on linear least squares. VRM penalizes the size of the weights, with strength fixed by the assumed noise level. Mixup has no effect: it penalizes the residual variance, which is already zero at the ERM optimum, so the minimizer does not move. W-DRO also penalizes the size of the weights, but scales the penalty by the current residuals. Label smoothing rescales the ERM solution toward zero without changing its direction. 



\section{Tractable Learning and Optimization}
\label{sec:algorithm} 
The unified framework \eqref{eq:robust_framework} encompasses objectives with fundamentally different structures, ranging from inner max/min problems over distribution spaces to sampling-based vicinal objectives. A priori, it is not obvious that these admit a common training procedure, and exact solutions are generally intractable. However, exact solutions are not a prerequisite for effective learning and good generalization: Stochastic gradient descent does not solve the ERM problem exactly either, yet generalizes well in practice.
Thus, rather than solving~\eqref{eq:robust_framework} to optimality, we construct a practical surrogate that preserves the qualitative behavior of each instantiation while admitting end-to-end training via backpropagation. 

Figure~\ref{fig:robust_pipeline} illustrates this unified training procedure. Given a mini-batch sampled from the empirical distribution $\Pemp$, providing a Monte Carlo estimate of all subsequent expectations, the
algorithm proceeds through four stages before a gradient
update.
\textbf{Stage~1} either leaves the mini-batch untouched, or enriches it with e.g. VRM or Mixup.
\textbf{Stage~2} realizes $\aggX$ by applying an (instance-wise) input-space perturbation within a Wasserstein ball of radius $\varrho$ via projected gradient descent (PGD) \cite{madry2018towards,sinha2018certifying}, adopting a pessimistic (worst-case), neutral, or optimistic (best-case) stance.
\textbf{Stage~3} realizes $\aggY$ as an independent label-space perturbation within a credal set of mass $\alpha$, again with a choice of stance, ranging from label smoothing (neutral) to label relaxation (optimistic) to adversarial label shifting (pessimistic).
This is based on an extension of the loss defined in \cite{DBLP:conf/aaai/LienenH21} and discussed in the Appendix~\ref{app:robust_label_loss}.
\textbf{Stage~4} realizes $\aggS$ by aggregating individual losses across the mini-batch.
$\mathcal{L}^{\mathrm{pess}} = \tau\,\log\!\left[\tfrac{1}{n}\textstyle\sum_i \exp\!\left(\ell_i/\tau\right)\right]$ and $\mathcal{L}^{\mathrm{opt}} = -\tau\,\log\!\left[\tfrac{1}{n}\textstyle\sum_i \exp\!\left(-\ell_i/\tau\right)\right]$
are the closed-form duals of the KL-constrained worst/best-case
expectation problem \citep{ben-talRobustSolutionsOptimization2013,levyLargescaleMethodsDistributionally2020}.
The neutral case recovers standard ERM.
Interestingly, $\mathcal{L}^{\mathrm{pess}}$ and $\mathcal{L}^{\mathrm{opt}}$  are equivalent to tilted ERM \citep{term2021} with tilt parameters $t = 1/\tau$ and $t = -1/\tau$, respectively. All design choices such as the VRM bandwidth $\sigma$, the Wasserstein radius $\varrho$, the label relaxation strength $\alpha$, and the KL dual parameter $\tau$ are expressed as explicit hyperparameters within a single unified loss. A single training pipeline therefore implements a wide range of established 
methods, e.g.\ $\sigma > 0$ alone recovers VRM and $\varrho > 0$ alone 
recovers W-DRO.

\paragraph{Joint hyperparameter optimization.}
Because every design choice is exposed as a continuous hyperparameter of a single objective, configuring an entire robustness pipeline reduces to a single hyperparameter optimization (HPO) problem that can be solved with standard black-box methods such as Bayesian optimization \cite{garnett-book23a} or random search. This reframes the practitioner's role from selecting a robustness method a priori to specifying a search budget. Rather than committing in advance to a particular approach (e.g., W-DRO with a hand-tuned radius), one performs joint HPO over $(\sigma, \varrho, \alpha, \tau)$. This unified formulation 
naturally includes existing methods as special cases while also enabling 
novel combinations. 
Thus, rather than expecting the practitioner to anticipate which combination of robustness mechanisms works best, this decision is now made in a data-driven way.

\begin{figure}[t]
  \centering
  \resizebox{\linewidth}{!}{\begin{tikzpicture}[node distance=0.5cm and 0.7cm]

\newcommand{\colA}{0}
\newcommand{\colB}{3.5}
\newcommand{\colC}{8.0}
\newcommand{\colD}{12.9}
\newcommand{\colE}{17.8}
\newcommand{\colF}{22.0}

\node[iobox, align=center] (batch) at (\colA, -1.0)
  {\textbf{Batch}\\[4pt]$(x_i,y_i)\sim\Pemp$};

\node[header=3.4cm, fill=clGreen, draw=clGreenBorder, anchor=north]
  (s1h) at (\colB, 1.1)
  {\textsf{1.\ Reference dist.\ $P_{\mathrm{ref}}$}};

\node[sub=3.4cm, fill=clGreenLight, draw=clGreenBorder, below=0.3cm of s1h]
  (s1vrm)
  {\textit{Gaussian VRM}\\[3pt]
   \footnotesize $\tilde{x}=x+\varepsilon,\quad\varepsilon\sim\mathcal{N}(0,\sigma^2 I)$};

\node[sub=3.4cm, fill=clGreenLight, draw=clGreenBorder, below=0.3cm of s1vrm]
  (s1mix)
  {\textit{Mixup}\\[3pt]
   \footnotesize $\tilde{x}=\lambda x_i+(1{-}\lambda)x_j$\\[-1pt]
   \footnotesize $\tilde{y}=\lambda y_i+(1{-}\lambda)y_j$};

\node[sub=3.4cm, fill=clGreenLight, draw=clGreenBorder, below=0.3cm of s1mix]
  (s1id)
  {\textit{Identity}\quad $P_{\mathrm{ref}}=\Pemp$};

\node[header=4.2cm, fill=clOrange, draw=clOrangeBorder, anchor=north]
  (s2h) at (\colC, 1.1)
  {\textsf{2.\ Input-space $\Phi_\mathcal{X}$}\\[2pt]
   \footnotesize Ambiguity set, radius $\varrho$,\\ via PGD};

\node[sub=4.2cm, fill=clOrangeLight, draw=clOrangeBorder, below=0.3cm of s2h]
  (s2pess)
  {\textit{pessimistic} (W-DRO)\\[3pt]
   \footnotesize $\arg\max_{\|\delta\|\le\varrho}\,\ell(h(x{+}\delta),y)$};

\node[sub=4.2cm, fill=clOrangeLight, draw=clOrangeBorder, below=0.3cm of s2pess]
  (s2neut)
  {\textit{neutral}\quad$\delta=0$};

\node[sub=4.2cm, fill=clOrangeLight, draw=clOrangeBorder, below=0.3cm of s2neut]
  (s2opt)
  {\textit{optimistic} (W-DFO)\\[3pt]
   \footnotesize $\arg\min_{\|\delta\|\le\varrho}\,\ell(h(x{+}\delta),y)$};

\node[header=4.2cm, fill=clPurple, draw=clPurpleBorder, anchor=north]
  (s3h) at (\colD, 1.1)
  {\textsf{3.\ Label-space $\Phi_\mathcal{Y}$}\\[2pt]
   \footnotesize Credal set, mass $\alpha$,\\ independent of $\Phi_\mathcal{X}$};

\node[sub=4.2cm, fill=clPurpleLight, draw=clPurpleBorder, below=0.3cm of s3h]
  (s3pess)
  {\textit{pessimistic}\\[3pt]
   \footnotesize shift $\alpha$ to $\arg\min_{k\ne y_i}p_k$};

\node[sub=4.2cm, fill=clPurpleLight, draw=clPurpleBorder, below=0.3cm of s3pess]
  (s3neut)
  {\textit{neutral} (label smoothing)\\[3pt]
   \footnotesize $\tilde{y}=(1{-}\alpha)y+\tfrac{\alpha}{K}\mathbf{1}$};

\node[sub=4.2cm, fill=clPurpleLight, draw=clPurpleBorder, below=0.3cm of s3neut]
  (s3opt)
  {\textit{optimistic} (label relaxation)\\[3pt]
   \footnotesize shift $\alpha$ toward model pred.};

\node[header=4.2cm, fill=clBlue, draw=clBlueBorder, anchor=north]
  (s4h) at (\colE, 1.1)
  {\textsf{4.\ Sample-space $\Phi_\mathcal{S}$}\\[2pt]
   \footnotesize KL-ball aggregation, temp.\ $\tau$};

\node[sub=4.2cm, fill=clBlueLight, draw=clBlueBorder, below=0.3cm of s4h]
  (s4pess)
  {\textit{pessimistic} (KL-DRO)\\[3pt]
   \footnotesize $\tau\log\bigl[\tfrac{1}{n}\sum_i\exp(\ell_i/\tau)\bigr]$};

\node[sub=4.2cm, fill=clBlueLight, draw=clBlueBorder, below=0.3cm of s4pess]
  (s4neut)
  {\textit{neutral} (ERM)\quad$\tfrac{1}{n}\sum_i\ell_i$};

\node[sub=4.2cm, fill=clBlueLight, draw=clBlueBorder, below=0.3cm of s4neut]
  (s4opt)
  {\textit{optimistic} (KL-DFO)\\[3pt]
   \footnotesize $-\tau\log\bigl[\tfrac{1}{n}\sum_i\exp(-\ell_i/\tau)\bigr]$};

\node[iobox, align=center] (update) at (\colF, -1.0)
  {\textbf{Update}\\[4pt]
   $\theta\leftarrow\theta-\eta_{\mathrm{lr}}\nabla_\theta\ell$};

\draw[arr] (batch) -- (s1h);
\draw[arr] (s1h)   -- (s2h);
\draw[arr] (s2h)   -- (s3h);
\draw[arr] (s3h)   -- (s4h);
\draw[arr] (s4h)   -- (update);

\end{tikzpicture}}
  \caption{Unified robust training procedure. A mini-batch passes through four stages: distributional enrichment ($P_{\mathrm{ref}}$), input-space perturbation ($\aggX$), label-space perturbation ($\aggY$), and batch-level aggregation ($\aggS$), before a single gradient update.}
  \label{fig:robust_pipeline}
\end{figure}
%
%
\section{Experiments}
\label{sec:experiments}
In practice, the dominant source of uncertainty for a given task is rarely known a priori, and the right robustness method to deploy is correspondingly unclear. Selecting one method up front amounts to committing to a particular failure mode, regardless of which failure modes actually arise in the data. We therefore evaluate all methods under identical conditions across three modalities: tabular data via TableShift~\citep{gardner2023tableshift} (Hypert., Diabetes, College, Hospital; remaining datasets in Appendix~\ref{app:experimental_details}), image data via Waterbirds~\citep{Sagawa2020Distributionally} and CelebA~\citep{liu2015celeba}, and reward modelling via HH-RLHF~\citep{bai2022training}. Many robustness methods exist; we therefore compare against
representative methods that each address one failure mode. Eight baselines span the three
robustness axes: distribution shift (W-DRO, KL-DRO, W-DFO, KL-DFO),
output-space (Label Smoothing, Label Relaxation), input-space (VRM);
together with ERM. We additionally include \emph{Joint}, the configuration enabled by our framework that tunes the full search space jointly. We report accuracy, Brier score, and CVaR ($10\%$) over
10 seeds. Following \citet{gulrajani-lost-dg-21}, we evaluate under both ID and OOD validation regimes. Within each run, the same metric drives both HP search and checkpoint selection. To give every method a fair chance at its best performance, all
methods receive the same HPO budget of 50 trials with Optuna's TPE
sampler~\citep{optuna_2019}. Reported numbers therefore reflect what each method can
achieve under matched tuning effort. Full details are deferred to
Appendix~\ref{app:experimental_details}.
 
\textit{Tabular.} Table~\ref{tab:tabular_results} reports results on
four TableShift datasets. Several patterns are visible across the columns.
On accuracy, all methods achieve comparable performance: ERM, KL-DRO, LS, LR, and the Joint
configuration all sit within one standard deviation of each other on every
dataset, with the Wasserstein-based variants (W-DRO, W-DFO) trailing by
several points. This is consistent with~\citet{gulrajani-lost-dg-21}:
under matched protocols, no robustness method consistently beats ERM on
accuracy in domain-generalization benchmarks. The ranking on tail risk is
markedly different. CVaR separates the methods into two groups: ERM, VRM,
and the Wasserstein-based variants stay above $1.0$ on most datasets,
while KL-DRO, LS, LR, and the Joint configuration cluster around or below
$1.0$. Joint optimization is within the lower group on all four datasets, with CVaR
roughly half that of ERM on Hypert. and Diabetes. Brier scores show smaller spreads but follow the
same general pattern: KL-DRO, LS, and Joint are tightly grouped at the
low end.

\textit{Image.} Table~\ref{tab:image_results} reports results on CelebA and Waterbirds. Accuracy sits in a narrow band across methods, so the interesting variation is on tail risk. On Waterbirds in particular, CVaR spreads over more than a factor of four across methods, from above $3.0$ for ERM, VRM, and the Wasserstein-based variants down to below $0.8$ at the low end. The label-side interventions (LS, LR) recover a large portion of this gap on their own, and methods that target the ambiguity-set structure (KL-DRO, Joint) close the rest, reaching CVaR values an order of magnitude below ERM on Waterbirds OOD without measurable accuracy cost. CelebA shows the same ordering on a compressed scale, since group imbalance there is milder. Brier scores move in a narrower range and broadly track the CVaR ranking. 

\textit{Reward learning.} Table~\ref{tab:reward_results} reports results on HH-RLHF reward modelling. The Joint configuration reaches $0.713$ accuracy on the OOD test, against $0.687$ for the strongest baseline (W-DFO), and reduces OOD CVaR from $0.697$ (KL-DRO) to $0.695$. The gap to ERM is substantially larger than on the tabular and image benchmarks. 

Figure~\ref{fig:mainresults} summarizes Tables~\ref{tab:tabular_results}--\ref{tab:reward_results} into a per-modality view. For each $(\text{dataset}, \text{metric})$ cell, we normalize OOD test performance (flipping loss metrics so higher is better) and average across cells, yielding one distribution per method and modality. The joint configuration ranks within the top group across all modalities, though its margin varies: it is competitive with KL-DRO and LR on image and tabular, and clearly leads on reward learning. On any single modality, a well-tuned baseline can match the joint configuration. However, selecting the right method requires knowing the dominant failure mode \emph{a priori}. By combining axes, the joint configuration avoids this selection problem and remains consistently competitive across modalities, providing robustness under unknown failure modes.

\begin{table}[t]
\centering
\caption{\textbf{Tabular} benchmarks. Mean $\pm$ std over 10 seeds, OOD val / OOD test. 
}
\label{tab:tabular_results}
\resizebox{\textwidth}{!}{%
\setlength{\tabcolsep}{4pt}
\begin{tabular}{lcccccccccccc}
\toprule
& \multicolumn{4}{c}{Acc. ↑} & \multicolumn{4}{c}{CVaR ↓} & \multicolumn{4}{c}{Brier ↓} \\
\cmidrule(lr){2-5} \cmidrule(lr){6-9} \cmidrule(lr){10-13}
 & Hypert. & Diabetes & College & Hospital & Hypert. & Diabetes & College & Hospital & Hypert. & Diabetes & College & Hospital \\
\midrule
ERM & $\underline{.643_{\pm.006}}$ & $.829_{\pm.003}$ & $\mathbf{.833_{\pm.003}}$ & $\underline{.617_{\pm.001}}$ & $1.325_{\pm.084}$ & $2.076_{\pm.107}$ & $2.298_{\pm.031}$ & $1.354_{\pm.035}$ & $.443_{\pm.003}$ & $.244_{\pm.002}$ & $\underline{.249_{\pm.005}}$ & $.461_{\pm.001}$ \\
VRM & $.640_{\pm.003}$ & $.826_{\pm.004}$ & $.831_{\pm.003}$ & $\mathbf{.618_{\pm.001}}$ & $1.338_{\pm.153}$ & $1.919_{\pm.361}$ & $2.173_{\pm.135}$ & $1.353_{\pm.022}$ & $.447_{\pm.004}$ & $.243_{\pm.006}$ & $.250_{\pm.003}$ & $.461_{\pm.001}$ \\
W-DRO & $.553_{\pm.013}$ & $.827_{\pm.000}$ & $.767_{\pm.026}$ & $.569_{\pm.008}$ & $1.150_{\pm.040}$ & $2.108_{\pm.032}$ & $1.777_{\pm.140}$ & $.994_{\pm.029}$ & $.497_{\pm.008}$ & $.260_{\pm.002}$ & $.330_{\pm.020}$ & $.483_{\pm.004}$ \\
KL-DRO & $\mathbf{.643_{\pm.001}}$ & $\underline{.830_{\pm.001}}$ & $.827_{\pm.003}$ & $\underline{.617_{\pm.001}}$ & $\underline{.880_{\pm.070}}$ & $1.625_{\pm.153}$ & $\underline{.814_{\pm.007}}$ & $\underline{.802_{\pm.016}}$ & $\mathbf{.442_{\pm.002}}$ & $\mathbf{.242_{\pm.006}}$ & $.251_{\pm.006}$ & $\mathbf{.460_{\pm.000}}$ \\
W-DFO & $.585_{\pm.016}$ & $.828_{\pm.001}$ & $.772_{\pm.021}$ & $.581_{\pm.008}$ & $1.319_{\pm.060}$ & $2.176_{\pm.028}$ & $1.796_{\pm.209}$ & $1.173_{\pm.043}$ & $.486_{\pm.013}$ & $.255_{\pm.002}$ & $.331_{\pm.017}$ & $.480_{\pm.004}$ \\
KL-DFO & $.639_{\pm.005}$ & $.826_{\pm.004}$ & $.810_{\pm.003}$ & $.601_{\pm.014}$ & $1.641_{\pm.231}$ & $3.662_{\pm1.211}$ & $1.900_{\pm.179}$ & $1.544_{\pm.027}$ & $.461_{\pm.014}$ & $.245_{\pm.006}$ & $.284_{\pm.006}$ & $.466_{\pm.001}$ \\
LS & $.638_{\pm.004}$ & $\mathbf{.831_{\pm.001}}$ & $\underline{.832_{\pm.003}}$ & $\mathbf{.618_{\pm.001}}$ & $.913_{\pm.034}$ & $.988_{\pm.048}$ & $.992_{\pm.009}$ & $.899_{\pm.008}$ & $\underline{.443_{\pm.002}}$ & $\underline{.243_{\pm.005}}$ & $.253_{\pm.005}$ & $\underline{.460_{\pm.001}}$ \\
LR & $.639_{\pm.003}$ & $\mathbf{.831_{\pm.001}}$ & $.832_{\pm.005}$ & $\mathbf{.618_{\pm.001}}$ & $.907_{\pm.036}$ & $\underline{.970_{\pm.060}}$ & $1.016_{\pm.013}$ & $.901_{\pm.008}$ & $\underline{.444_{\pm.002}}$  & $.245_{\pm.005}$ & $.252_{\pm.006}$ & $\underline{.460_{\pm.001}}$ \\
\midrule
\rowcolor{gray!12} $\hookrightarrow$~\textsc{Joint} & $.635_{\pm.010}$ & $.830_{\pm.002}$ & $.830_{\pm.004}$ & $\mathbf{.618_{\pm.001}}$ & $\mathbf{.761_{\pm.029}}$ & $\mathbf{.844_{\pm.006}}$ & $\mathbf{.805_{\pm.015}}$ & $\mathbf{.767_{\pm.013}}$ & $\mathbf{.442_{\pm.002}}$ & $.245_{\pm.005}$ & $\mathbf{.248_{\pm.005}}$ & $\mathbf{.460_{\pm.000}}$ \\
\bottomrule
\end{tabular}%
}
\end{table}

\begin{table}[t]
\centering
\caption{\textbf{Image} benchmarks. Mean $\pm$ std over seeds, ID val / ID test and OOD val / OOD test.}
\label{tab:image_results}
\resizebox{\textwidth}{!}{%
\setlength{\tabcolsep}{4pt}
\begin{tabular}{lcccccccccccc}
\toprule
& \multicolumn{6}{c}{ID} & \multicolumn{6}{c}{OOD} \\ \cmidrule(lr){2-7} \cmidrule(lr){8-13}

 & \multicolumn{2}{c}{Acc. ↑} & \multicolumn{2}{c}{CVaR ↓} & \multicolumn{2}{c}{Brier ↓} & \multicolumn{2}{c}{Acc. ↑} & \multicolumn{2}{c}{CVaR ↓} & \multicolumn{2}{c}{Brier ↓} \\ \cmidrule(lr){2-3} \cmidrule(lr){4-5} \cmidrule(lr){6-7} \cmidrule(lr){8-9} \cmidrule(lr){10-11} \cmidrule(lr){12-13}  & CelebA & Waterbirds & CelebA & Waterbirds & CelebA & Waterbirds & CelebA & Waterbirds & CelebA & Waterbirds & CelebA & Waterbirds \\
\midrule
ERM & $.942_{\pm.002}$ & $.778_{\pm.008}$ & $1.117_{\pm.043}$ & $3.362_{\pm.094}$ & $.086_{\pm.004}$ & $.327_{\pm.015}$ & $.945_{\pm.003}$ & $.769_{\pm.010}$ & $1.127_{\pm.067}$ & $3.415_{\pm.098}$ & $.082_{\pm.004}$ & $.343_{\pm.015}$ \\
VRM & $.942_{\pm.001}$ & $.723_{\pm.007}$ & $.888_{\pm.011}$ & $3.266_{\pm.063}$ & $\mathbf{.084_{\pm.002}}$ & $.321_{\pm.020}$ & $.945_{\pm.003}$ & $.712_{\pm.010}$ & $.887_{\pm.044}$ & $3.261_{\pm.046}$ & $\underline{.080_{\pm.005}}$ & $.338_{\pm.021}$ \\
W-DRO & $.943_{\pm.002}$ & $.780_{\pm.007}$ & $1.093_{\pm.045}$ & $3.216_{\pm.156}$ & $\underline{.084_{\pm.003}}$ & $.316_{\pm.005}$ & $\underline{.946_{\pm.002}}$ & $.771_{\pm.007}$ & $1.102_{\pm.095}$ & $3.178_{\pm.139}$ & $\mathbf{.080_{\pm.004}}$ & $.327_{\pm.004}$ \\
KL-DRO & $\underline{.943_{\pm.000}}$ & $\mathbf{.817_{\pm.008}}$ & $\underline{.710_{\pm.003}}$ & $\underline{.806_{\pm.016}}$ & $.086_{\pm.003}$ & $\underline{.281_{\pm.014}}$ & $.945_{\pm.003}$ & $\mathbf{.811_{\pm.009}}$ & $\underline{.712_{\pm.006}}$ & $\underline{.804_{\pm.020}}$ & $.083_{\pm.004}$ & $\underline{.292_{\pm.014}}$ \\
W-DFO & $.941_{\pm.002}$ & $.722_{\pm.032}$ & $1.135_{\pm.038}$ & $3.506_{\pm.075}$ & $.086_{\pm.004}$ & $.322_{\pm.013}$ & $.945_{\pm.002}$ & $.714_{\pm.032}$ & $1.128_{\pm.082}$ & $3.570_{\pm.091}$ & $.082_{\pm.004}$ & $.336_{\pm.013}$ \\
KL-DFO & $.942_{\pm.001}$ & $.795_{\pm.000}$ & $1.196_{\pm.070}$ & $3.256_{\pm.406}$ & $.087_{\pm.003}$ & $.341_{\pm.013}$ & $.945_{\pm.002}$ & $.735_{\pm.043}$ & $1.189_{\pm.118}$ & $3.333_{\pm.434}$ & $.082_{\pm.004}$ & $.358_{\pm.012}$ \\
LS & $\mathbf{.944_{\pm.002}}$ & $.806_{\pm.005}$ & $.758_{\pm.007}$ & $1.025_{\pm.007}$ & $.088_{\pm.003}$ & $\mathbf{.279_{\pm.008}}$ & $\mathbf{.947_{\pm.001}}$ & $.793_{\pm.007}$ & $.751_{\pm.010}$ & $1.028_{\pm.007}$ & $.085_{\pm.003}$ & $\mathbf{.289_{\pm.008}}$ \\
LR & $.943_{\pm.001}$ & $.791_{\pm.008}$ & $.778_{\pm.012}$ & $1.173_{\pm.017}$ & $.087_{\pm.003}$ & $.303_{\pm.006}$ & $\underline{.946_{\pm.002}}$ & $.787_{\pm.007}$ & $.774_{\pm.014}$ & $1.186_{\pm.019}$ & $.083_{\pm.003}$ & $.316_{\pm.006}$ \\
\midrule
\rowcolor{gray!12} $\hookrightarrow$~\textsc{Joint} & $.943_{\pm.003}$ & $\underline{.812_{\pm.008}}$ & $\mathbf{.697_{\pm.001}}$ & $\mathbf{.723_{\pm.001}}$ & $\underline{.084_{\pm.003}}$ & $.296_{\pm.008}$ & $.945_{\pm.002}$ & $\underline{.802_{\pm.008}}$ & $\mathbf{.697_{\pm.000}}$ & $\mathbf{.724_{\pm.002}}$ & $.081_{\pm.003}$ & $.310_{\pm.009}$ \\
\bottomrule
\end{tabular}%
}
\end{table}

\begin{table}[t]
\centering
\caption{\textbf{Reward learning}, Mean $\pm$ std over seeds, ID val / ID test and OOD val / OOD test. }
\label{tab:reward_results}
\resizebox{0.6\textwidth}{!}{%
\setlength{\tabcolsep}{4pt}
\begin{tabular}{lcccccc}
\toprule
& \multicolumn{3}{c}{ID} & \multicolumn{3}{c}{OOD} \\ \cmidrule(lr){2-4} \cmidrule(lr){5-7}

 & Acc. ↑ & CVaR ↓ & Brier ↓ & Acc. ↑ & CVaR ↓ & Brier ↓ \\
\midrule
ERM & $\mathbf{.766_{\pm.000}}$ & $1.117_{\pm.014}$ & $\mathbf{.156_{\pm.000}}$ & $.612_{\pm.004}$ & $.906_{\pm.015}$ & $.230_{\pm.002}$ \\
VRM & $.760_{\pm.000}$ & $.855_{\pm.005}$ & $.162_{\pm.000}$ & $.614_{\pm.005}$ & $.781_{\pm.008}$ & $.231_{\pm.001}$ \\
W-DRO & $.760_{\pm.000}$ & $.790_{\pm.009}$ & $.175_{\pm.000}$ & $.560_{\pm.014}$ & $.770_{\pm.010}$ & $.246_{\pm.002}$ \\
KL-DRO & $\mathbf{.766_{\pm.000}}$ & $\underline{.698_{\pm.000}}$ & $.158_{\pm.000}$ & $.673_{\pm.017}$ & $\underline{.697_{\pm.000}}$ & $.230_{\pm.002}$ \\
W-DFO & $.758_{\pm.000}$ & $1.137_{\pm.016}$ & $.163_{\pm.000}$ & $\underline{.687_{\pm.056}}$ & $.938_{\pm.015}$ & $\mathbf{.199_{\pm.016}}$ \\
KL-DFO & $.765_{\pm.000}$ & $1.038_{\pm.011}$ & $\underline{.157_{\pm.000}}$ & $.639_{\pm.007}$ & $.914_{\pm.016}$ & $.232_{\pm.002}$ \\
LS & $\underline{.766_{\pm.001}}$ & $1.033_{\pm.007}$ & $\underline{.157_{\pm.000}}$ & $.631_{\pm.005}$ & $.862_{\pm.012}$ & $.229_{\pm.002}$ \\
LR & $\underline{.766_{\pm.001}}$ & $1.033_{\pm.007}$ & $\underline{.157_{\pm.000}}$ & $.631_{\pm.006}$ & $.727_{\pm.003}$ & $.229_{\pm.002}$ \\
\midrule
\rowcolor{gray!12} $\hookrightarrow$~\textsc{Joint} & $.765_{\pm.001}$ & $\mathbf{.694_{\pm.000}}$ & $\underline{.157_{\pm.000}}$ & $\mathbf{.713_{\pm.005}}$ & $\mathbf{.695_{\pm.000}}$ & $\underline{.202_{\pm.003}}$ \\
\bottomrule
\end{tabular}%
}
\end{table}

\subsection{Understanding the Joint configuration}

We now examine the results for the reward learning task more closely through a Shapley analysis and a budget analysis that rules out the richer search space as a confound.
\paragraph{Shapley value analysis.}
Using the Faithful Shapley Interaction Index~\citep{fsii,shapiq}, we analyze three representative components, each acting on a different part of the training procedure: VRM (input perturbation), LS (label perturbation), and KL-DRO (sample-level loss aggregation). For each of the 
$2^3 = 8$ coalitions, active components are set to their HPO-tuned values 
and inactive ones disabled. Figure~\ref{fig:shapley} reports the  marginal contributions and interactions.

Two patterns emerge. \emph{First, single-component contributions do not 
predict behavior in combination.} On accuracy and Brier, LS is the weakest 
single component yet the interaction of LS with DRO dominates. On CVaR, 
DRO is strongest in isolation and VRM adds an independent gain, but their 
combination is negative. \emph{Second, which interactions matter is 
metric-dependent.} Accuracy and calibration are driven by LS combined with DRO; 
tail risk by DRO acting alone or with VRM. No single configuration wins 
across all metrics, so exposing the full space to joint HPO --- as the 
Joint configuration does --- is necessary to recover the best combination 
for a given objective.

\paragraph{Robustness to HPO budget.}
Joint exposes a richer search space than the baselines, raising a natural concern about the matched 50-trial budget: simple baselines like VRM expose a single hyperparameter and may saturate after a handful of trials, leaving the remaining budget unused, while Joint has many more degrees of freedom and could keep improving until the very end. Figure~\ref{fig:hpo_budget_rlhf} plots running-best OOD-validation performance against trial count on HH-RLHF and shows this concern is unfounded. The baselines plateau early, often within the first 5--20 trials, but so does Joint: despite its larger search space, it stops improving on the same timescale. The 50-trial budget is therefore not the binding constraint for either side, and the gap between Joint and the baselines is not budget-driven.

\begin{figure}
    \centering
\includegraphics[width=0.8\linewidth]{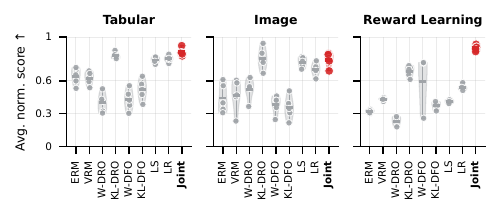}
    \caption{\textbf{Average normalized score per modality.} Per-method composite across all (dataset, metric) in the OOD setting; higher is better. \textbf{Joint} (red) leads on Tabular and Reward Learning and is in the top group on Image. Baselines in grey.}
    \label{fig:mainresults}
\end{figure}

    
    

\section{Related Work}
\label{sec:related_work}
Most prior work on robust supervised learning operates along a single axis of our framework, leaving the others fixed implicitly.
Along the ambiguity-set axis, \citet{weihua-icml18a} expose the structure of the ambiguity set as a configurable choice, showing that $f$-divergence DRO with an unstructured set collapses to ERM under classification losses. \citet{slowik-aistats22a} establish that calibrated DRO is equivalent to ERM on a re-weighted mixture. Along the disambiguation-principle axis, probabilistically robust learning~\citep{proba_robust_robey22} replaces the worst-case supremum with a $\rho$-quantile that interpolates between ERM and adversarial training at the input level. Tilted ERM~\citep{term2021} reweights samples through an exponential tilt equivalent to the KL-DRO/DFO dual that appears in stage~4 of our pipeline. \citet{singhDomainGeneralisationImprecise2024a} train an augmented hypothesis $h(x,\lambda)$ that exposes a continuum of CVaR-aggregated solutions.

Work that varies more than one axis remains scarce. DRO-Augment~\citep{DRO-Augment-iclrreject} composes W-DRO with data augmentation, jointly perturbing the reference distribution and the ambiguity set. \citet{freieslebenGeneralizationTheoryRobustness2023} offer a conceptual analysis of robustness as a multi-place relation, but without a corresponding algorithmic instantiation.

What prior work lacks is a unifying decomposition: a way of viewing different robustness mechanisms not as competing alternatives but as configurable choices along design axes. We provide such a framework. The reference distribution, the ambiguity set, and the disambiguation principle are three axes that span a common design space, with existing methods arising from specific axis choices. Under this view, combining and tuning mechanisms becomes a matter of configuration within it.

\begin{figure}[t]
\centering    
\includegraphics[width=.7\textwidth]{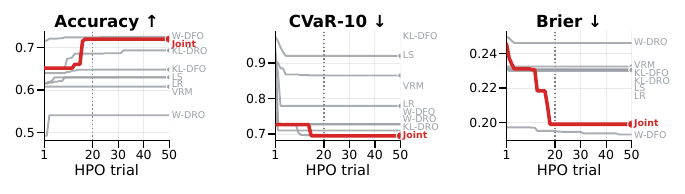}
\caption{HPO budget on HH-RLHF: running-best OOD-validation performance vs.\ trial count. Joint reaches best CVaR and Brier within the first 20 trials; the advantage is not budget-driven.}        
\label{fig:hpo_budget_rlhf}                          
\end{figure}        
\begin{figure}[t]
\centering
\begin{minipage}{0.75\linewidth}  
    \begin{subfigure}{0.32\linewidth}
        \centering
        \includegraphics[width=\linewidth]{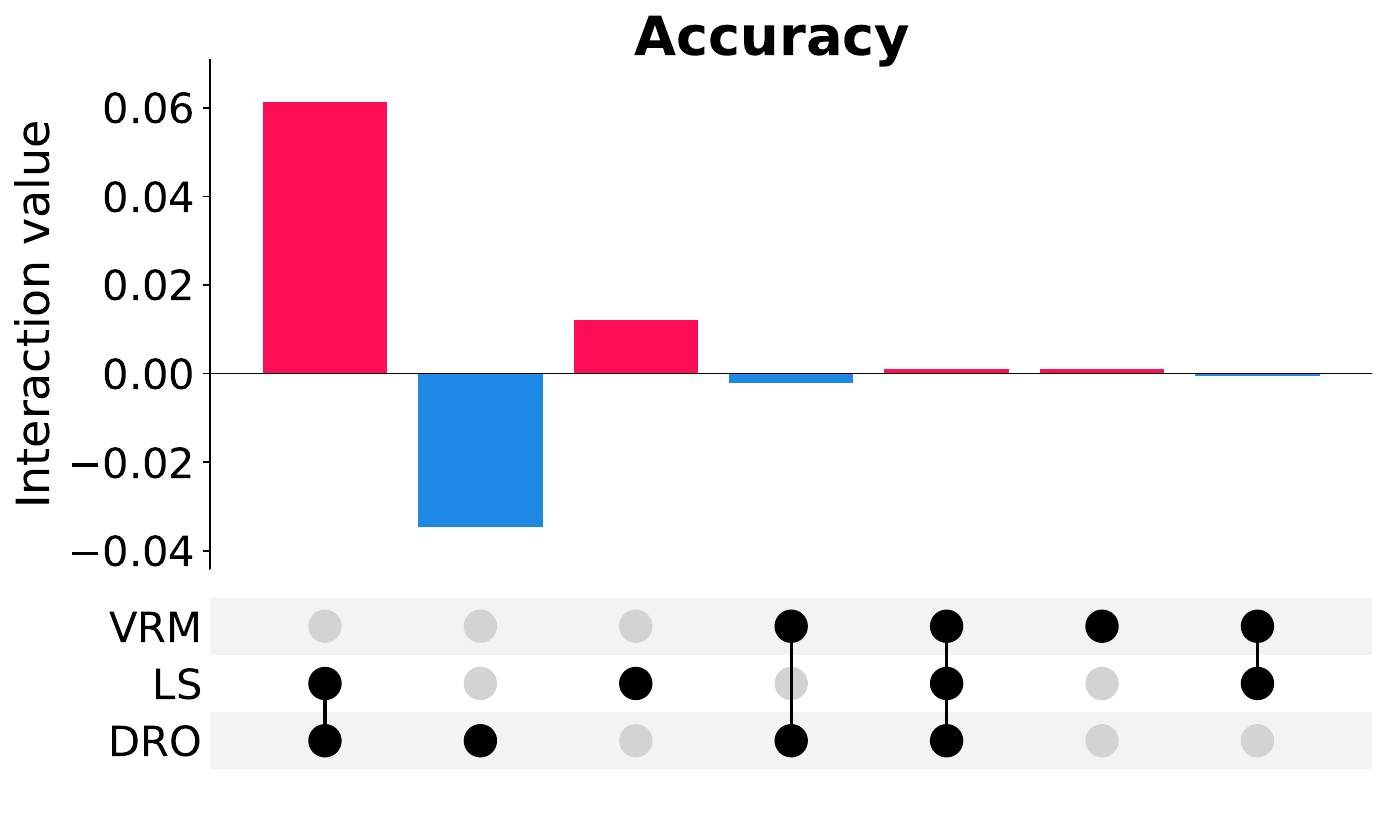}
    \end{subfigure}\hfill
    \begin{subfigure}{0.32\linewidth}
        \centering
        \includegraphics[width=\linewidth]{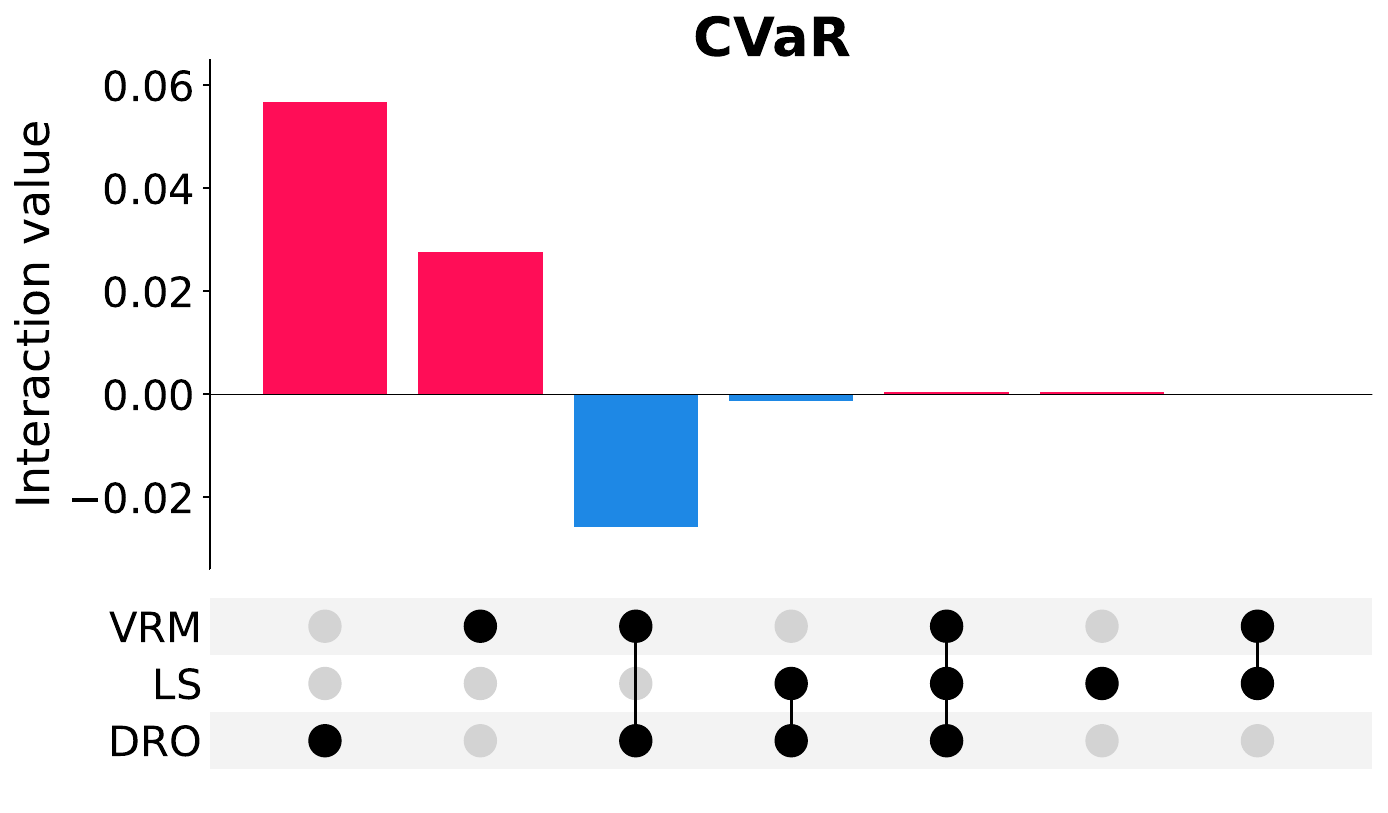}
    \end{subfigure}\hfill
    \begin{subfigure}{0.32\linewidth}
        \centering
        \includegraphics[width=\linewidth]{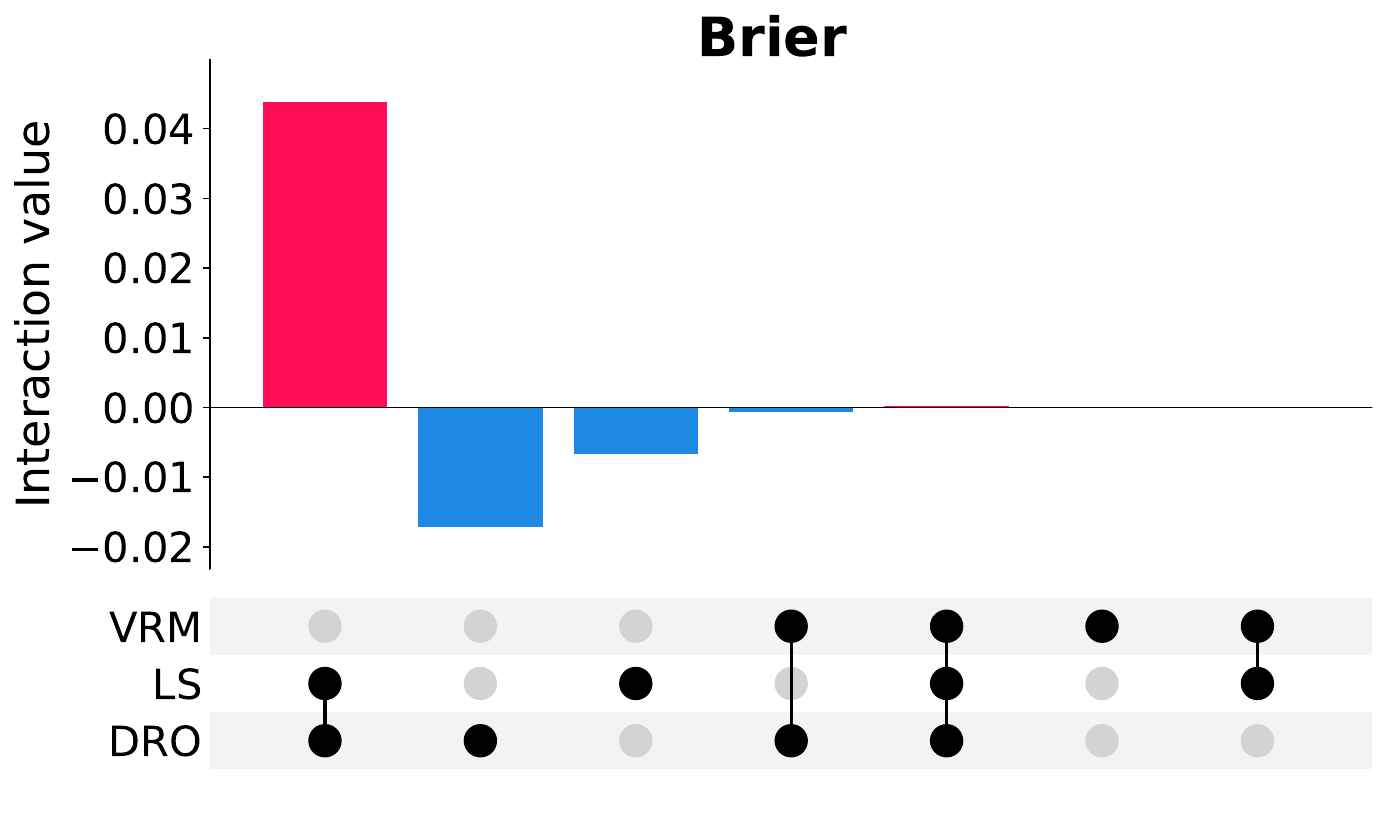}
    \end{subfigure}
\end{minipage}
    \caption{Shapley interaction analysis on OOD test performance for reward learning. Each bar is a coalition's marginal contribution relative to ERM.}
    \label{fig:shapley}
\end{figure}

\section{Conclusion and Future Work}
\label{sec:conclusion}

We introduced a unified framework for robust supervised learning that organizes a broad class of existing methods along three design axes: a reference distribution $P_{\mathrm{ref}}$, an ambiguity set $\U(P_{\mathrm{ref}})$, and a disambiguation principle $\agg$. Existing methods arise as specific instantiations of these components, with ERM corresponding to a degenerate case.
We further developed a tractable and modular learning procedure that exposes all design choices as hyperparameters within a single search space. This enables the relative contribution of different robustness mechanisms to be determined via hyperparameter optimization, rather than fixed \emph{a priori}. Across tabular, image, and reward-modeling benchmarks, the resulting joint configurations are competitive with strong single-axis baselines, with the largest gains on reward modeling. When the dominant failure mode is unknown, joint hyperparameter optimization (which can recover any single mechanism as a special case or combine several) is a more reliable default than committing in advance.

Several directions remain for future work. First, extending the framework to more general aggregation operators, the most general being the Choquet integral \cite{choquetTheoryCapacities1954}, may further increase expressivity. Second, the framework naturally lends itself to downstream applications, such as learning credal ensembles via different instantiations of its components~\cite{wangLearningCredalEnsembles2026}.

\paragraph{Limitations and Broader Impact}\label{sec:limitations}
Our empirical evaluation is limited to eight baselines and three metrics and would benefit from broader coverage. Further, methods such as adversarial training 
variants and group DRO were not included and may interact with our 
framework in ways we did not test. HPO is performed with Optuna's TPE sampler at a fixed 50-trial budget; alternative black-box optimizers such as Gaussian-process Bayesian optimization or genetic optimizers may yield different configurations, and a systematic comparison is left to future work.

The present work advances the field of robust machine learning. We do not foresee any direct negative broader impacts
arising from this beyond those generic to improved supervised learning.

\newpage


\bibliographystyle{apalike}
\bibliography{literature}


\newpage
\appendix


\section{Extended Background}


In supervised learning, we consider an input space $\mathcal{X}$, 
an output space $\mathcal{Y}$ and we assume there exists an unknown distribution 
$P^\star \in \mathcal{P}(\mathcal{X} \times \mathcal{Y})$, 
where $\mathcal{P}(\mathcal{X} \times \mathcal{Y})$ denotes the set of all probability measures on $\mathcal{X} \times \mathcal{Y}$. Given a hypothesis class 
$\Hspace$ and a loss function $\ell : \mathcal{Y} \times 
\mathcal{Y} \to \mathbb{R}$, the goal is to find a hypothesis 
$h \in \Hspace$ minimizing the population risk,
\[
h^\star
= \argmin_{h \in \Hspace} R(h; P^\star)
= \argmin_{h \in \Hspace} \mathbb{E}_{(x,y)\sim P^\star}
[\ell(h(x), y)].
\]
Since $P^\star$ is unknown, it cannot be optimized against 
directly. Instead, given a finite dataset 
$\mathcal{D} = \{(x_i, y_i)\}_{i=1}^n$ drawn i.i.d.\ from 
$P^\star$, one constructs the empirical distribution
\(
\Pemp := \frac{1}{n}\sum_{i=1}^n \delta_{(x_i,y_i)},
\)
where $\delta_{(x,y)}$ denotes the Dirac measure at $(x,y)$, and 
uses it as a proxy for $P^\star$. Empirical risk 
minimization (ERM) then selects a hypothesis by minimizing the 
empirical risk,
\[
\hat{h}_{\mathrm{ERM}}
= \argmin_{h \in \Hspace} R(h; \Pemp)
= \argmin_{h \in \Hspace} \frac{1}{n}\sum_{i=1}^n
\ell(h(x_i), y_i).
\]
ERM can be suboptimal in practice for several 
reasons~\citep{blanchetDistributionallyRobustOptimization2025}. 
First, $\Pemp$ is a poor 
proxy for $P^\star$ when data are scarce: being a degenerate 
discrete measure, it is sensitive to random fluctuations and 
assigns zero probability to unobserved regions of the input 
space. Second, the observed samples may not exactly follow 
$P^\star$ due to data contamination such as measurement noise 
or label errors. Third, model performance is ultimately 
evaluated under a deployment distribution 
$P_{\mathrm{deploy}}$ that may differ from $P^\star$ due to 
e.g. distributional shift. Together, these issues motivate 
learning paradigms that go beyond ERM, and we now recall 
existing methods relevant to this work, organized by the 
failure mode they address.

\subsection{Finite-sample degeneracy}

Methods in this subsection address the degeneracy of the empirical 
distribution $\Pemp$ arising from finite data. Rather than 
optimizing directly over $\Pemp$, they enrich or smooth the 
training distribution to improve generalization to unseen regions 
of the input space.

\paragraph{Vicinal Risk Minimization (VRM).}
VRM~\citep{chapelleVicinalRiskMinimization2000} defines a 
\emph{vicinal distribution} that spreads probability mass over 
neighborhoods of each training example:
\(
P_{\text{vic}}
= \frac{1}{n} \sum_{i=1}^n \nu(\cdot \mid x_i, y_i),
\)
where $\nu(\cdot \mid x_i, y_i)$ is a vicinity distribution 
centered at $(x_i, y_i)$. The VRM objective is then
\[
\hat{h}_{\mathrm{VRM}} 
= \argmin_{h \in \Hspace} 
\mathbb{E}_{(x,y)\sim P_{\text{vic}}}[\ell(h(x),y)].
\]

\paragraph{Mixup.}
Mixup~\citep{zhangMixupEmpiricalRisk2018} constructs synthetic 
training samples via convex interpolation between pairs of 
training examples. Given two examples $(x_i, y_i)$ and 
$(x_j, y_j)$ with one-hot encoded labels, it constructs 
$\tilde{x} = \lambda x_i + (1-\lambda)x_j$ and 
$\tilde{y} = \lambda y_i + (1-\lambda)y_j$, where 
$\lambda \sim \mathrm{Beta}(\alpha, \alpha)$. Unlike VRM, which 
spreads mass locally around individual points, Mixup explicitly 
interpolates between distinct examples, encouraging the model 
to behave approximately linearly between training points.
An exemplary illustration of the effect of Mixup is depicted in Figure~\ref{fig:illustration_mixup_kldfo_wdro}

\subsection{Label noise and miscalibration}

Methods in this subsection address imperfections in the observed 
labels. Rather than treating labels as ground truth, they 
either soften target distributions or replace individual labels 
with sets of admissible distributions, leading to better-calibrated 
and more robust predictions.

\paragraph{Label Smoothing.}
Label smoothing~\citep{szegedyRethinkingInceptionArchitecture2016} 
uniformly redistributes a small amount of probability mass from 
the true class to all other classes. For a $K$-class problem with 
one-hot label $y = (0,\ldots,0,1,0,\ldots,0) \in \mathbb{R}^K$, 
it constructs
\(
\tilde{y}
=(
\alpha/K, \ldots, \alpha/K,
\; 1-\alpha+\alpha/K,
\; \alpha/K, \ldots, \alpha/K
),
\)
where $\alpha \in [0,1]$ controls the smoothing strength.

\paragraph{Label Relaxation.}
Label relaxation~\citep{DBLP:conf/aaai/LienenH21} takes a 
set-valued perspective: rather than committing to a single 
softened target, it replaces each observed label $y_i$ with an 
ambiguity set of admissible label distributions,
\(
\mathcal{Q}_i^\alpha
=\{
p \in \mathcal{P}(\mathcal{Y})
\;|\;
\sum_{y \neq y_i} p(y) \le \alpha
\},
\)
where $\alpha \in [0,1]$ controls the degree of relaxation. Any 
prediction within $\mathcal{Q}_i^\alpha$ incurs zero loss; 
predictions outside are penalized by their minimal distance to 
the set.

\subsection{Distribution shift}

The methods in this subsection address the discrepancies between the 
training distribution $\Pemp$ and the deployment distribution 
$P_{\mathrm{deploy}}$. They do so by optimizing not against 
$\Pemp$ directly, but against a set of distributions in its 
vicinity, an \emph{ambiguity set} $\U(\Pemp)$, differing in how they aggregate over this set: pessimistically 
(DRO) or optimistically (DFO).

\paragraph{Distributionally Robust Optimization (DRO).}
DRO~\citep{blanchetDistributionallyRobustOptimization2025} 
optimizes against the worst-case distribution within the 
ambiguity set. The ambiguity set is typically defined via a 
statistical distance measure; common choices include 
the KL-divergence, or the Wasserstein 
distance. In the latter instantiation,
\(
\U(\Pemp) 
= \{Q \in \mathcal{P}(\mathcal{X} \times \mathcal{Y}) 
: W_p(Q, \Pemp) \le \varrho \},
\)
where $W_p$ denotes the $p$-Wasserstein distance and $\varrho \ge 0$ 
controls the size of the ambiguity set. The DRO objective is then
\[
\hat{h}_{\mathrm{DRO}}
= \argmin_{h\in\Hspace} 
\sup_{Q \in \U(\Pemp)} 
\mathbb{E}_{(x,y)\sim Q}[\ell(h(x),y)].
\]
Notably, adversarial training~\citep{madry2018towards} can 
be viewed as a special case of Wasserstein-DRO in which the 
ambiguity set is restricted to pointwise input perturbations 
$x \mapsto x + \delta$ within a geometric set 
$\Delta \subset \mathbb{R}^d$~\citep{sinha2018certifying}.
An example of this is depicted in Figure~\ref{fig:illustration_mixup_kldfo_wdro}.

\paragraph{Distributionally Favorable Optimization (DFO).}
DFO~\citep{jiangDistributionallyFavorableOptimization2024a} takes 
the opposite stance, optimizing for the best-case distribution 
within the ambiguity set:
\[
\hat{h}_{\mathrm{DFO}}
= \argmin_{h\in\Hspace} 
\inf_{Q \in \U(\Pemp)} 
\mathbb{E}_{(x,y)\sim Q}[\ell(h(x),y)].
\]
This is motivated by settings where observed data may be corrupted 
or pessimistically biased, and optimizing for the best-case risk 
aims to recover a cleaner signal.
An example of KL-DFO, that reweights the support of the original distribution in a favorable way is depicted in Figure~\ref{fig:illustration_mixup_kldfo_wdro}.

\begin{figure}
    \centering
    \includegraphics[width=0.6\linewidth]{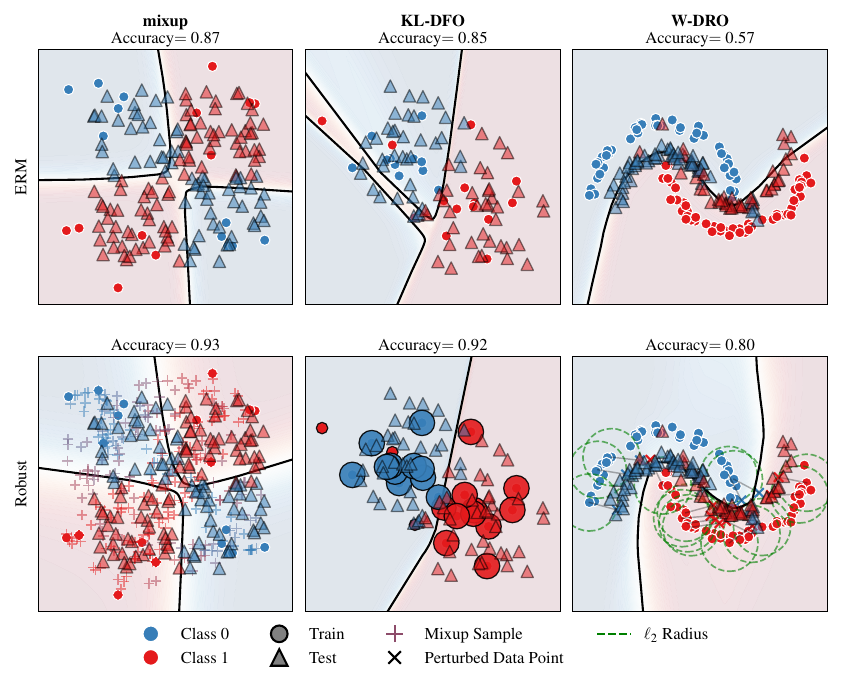}
    \caption{Exemplary demonstration of Mixup, KL-DFO and W-DRO in comparison to standard ERM.}
    \label{fig:illustration_mixup_kldfo_wdro}
\end{figure}
 
\paragraph{Probabilistically Robust Learning (PRL).}
PRL~\citep{proba_robust_robey22} interpolates between DRO and ERM 
by requiring robustness to \emph{most} perturbations rather than 
all, addressing the tendency of worst-case methods to 
overcompensate for rare events. Given a distribution $r$ over 
the perturbation set $\Delta$ and a tolerance level 
$\rho \in [0,1)$, the PRL objective is
\[
\hat{h}_{\mathrm{PRL}}
= \argmin_{h \in \Hspace}\,
\mathbb{E}_{(x,y)\sim \Pemp}
\bigl[\,\rho\text{-}\operatorname{ess\,sup}_{\delta \sim r}\,
\ell(h(x+\delta), y)\bigr],
\]
where the $\rho$-essential supremum discards the worst fraction 
$\rho$ of perturbations. As $\rho \to 0$, PRL recovers adversarial 
training; as $\rho \to 1$, it approaches ERM.

\section{Conceptual Extension} \label{app:coneptual}

\subsection{Robust Label Loss}
\label{app:robust_label_loss}
In order to incorporate a pessimistic variant of label relaxation, we adapt the loss function proposed by \citet{DBLP:conf/aaai/LienenH21}.
The credal target set for the optimistic case is given by
$$Q^\alpha_i = \Big\{ p \in \mathcal{P}(\mathcal{Y}) \;\Big|\; \textstyle\sum_{y \neq y_i} p(y) \leq \alpha \Big\},$$
and their optimistic loss is 
$$L^\text{opt}(Q^\alpha_i, \hat{p}) = \min_{p \in Q^\alpha_i} \KL(p \| \hat{p}).$$
The natural pessimistic counterpart replaces the infimum with a supremum.
Instantiating the supremum with KL yields 
$$\sup_{p \in Q^\alpha_i} \KL(p \| \hat{p}) = (1-\alpha) \log \tfrac{1-\alpha}{\hat{p}(y_i)} + \alpha \log \tfrac{\alpha}{\min_{y \neq y_i} \hat{p}(y)},
$$
attained at the vertex placing mass $1 - \alpha$ on $y_i$ and $\alpha$ on $\argmin_{y \neq y_i} \hat{p}(y)$. This grows without bound as $\min_{y \neq y_i} \hat{p}(y) \to 0$. 
We instead define $$L^\text{pess}(Q^\alpha_i, \hat{p}) = \sup_{p \in Q^\alpha_i} \mathbb{E}_{y \sim p}\big[-\log \hat{p}(y)\big].
$$
The objective is linear in $p$, so the supremum is attained at a vertex of the polytope $Q^\alpha_i$. The lower bound $p(y_i) \geq 1-\alpha$ is binding, and the residual mass $\alpha$ concentrates on the class with largest $-\log \hat{p}(y)$, yielding
$$L^\text{pess}(Q^\alpha_i, \hat{p}) = -(1-\alpha) \log \hat{p}(y_i) - \alpha \log \min_{y \neq y_i} \hat{p}(y).
$$
Consequently, $L^\text{pess}$ acts in an adversarial manner. While standard label smoothing distributes $\alpha$ mass uniformly across all incorrect classes, the pessimistic variant identifies the class the model is most confident is wrong and forces the model to hedge against that specific alternative.
\section{Experimental Details}       \label{app:experimental_details}              
\subsection{Datasets}

We evaluate across three modalities—tabular, vision, and NLP—each exhibiting distinct forms of distribution shift.           
\paragraph{Tabular data.}
We use the TableShift benchmark~\cite{gardner2023tableshift}, which
contains real-world datasets with naturally occurring distribution
shift. We evaluate on four datasets: \emph{Hypert.}
(\texttt{brfss\_blood\_pressure}), \emph{Diabetes}
(\texttt{brfss\_diabetes}), \emph{College}
(\texttt{college\_scorecard}), and \emph{Hospital}
(\texttt{diabetes\_readmission}), each with predefined in-distribution
(ID) and out-of-distribution (OOD) splits.
       
\paragraph{Image data.}                                 
We evaluate on Waterbirds \cite{Sagawa2020Distributionally} and CelebA~\cite{liu2015celeba}, which exhibit spurious correlation shift. In Waterbirds, bird species are spuriously correlated with background during
training. In CelebA, target attributes are correlated with demographic features in the training distribution.
               
\paragraph{NLP data.}                              
We use the Anthropic HH-RLHF dataset~\cite{bai2022training}, consisting of human preference annotations over assistant responses. We train on \texttt{helpful-base} and evaluate OOD
generalization on \texttt{harmless-base}, which differs in annotation criteria and prompt distribution.                                                                               

\subsection{Model Architectures}                         
\paragraph{Tabular models.}
We use a single-hidden-layer encoder-decoder network: the encoder maps inputs to a 32-dimensional latent space via a linear layer followed by batch normalization and ReLU; a linear
decoder reconstructs the input from the latent representation; and a linear classifier with dropout ($p=0.25$) predicts the class from the latent code. All methods share the same    
hypothesis class for fair comparison.
            
\subsubsection{Vision models.}
We extract frozen image embeddings using CLIP ViT-B/32~\cite{radford2021clip}. Images are resized to $224 \times 224$, encoded into 512-dimensional $\ell_2$-normalized embeddings,
and passed to a linear classification head.

 \begin{table}[t]
\centering
\caption{Tabular benchmarks (additional datasets).}
\label{tab:tabular_appendix_results}
\resizebox{\textwidth}{!}{%
\setlength{\tabcolsep}{4pt}
\begin{tabular}{lccccccccc}
\toprule
& \multicolumn{3}{c}{Acc. ↑} & \multicolumn{3}{c}{CVaR ↓} & \multicolumn{3}{c}{Brier ↓} \\
\cmidrule(lr){2-4} \cmidrule(lr){5-7} \cmidrule(lr){8-10}
 & ASSIST & HELOC & Lead & ASSIST & HELOC & Lead & ASSIST & HELOC & Lead \\
\midrule
ERM & $.469_{\pm.043}$ & $.484_{\pm.071}$ & $.920_{\pm.000}$ & $1.323_{\pm.190}$ & $1.010_{\pm.093}$ & $2.024_{\pm.292}$ & $.562_{\pm.028}$ & $.522_{\pm.033}$ & $.144_{\pm.005}$ \\
VRM & $.452_{\pm.014}$ & $.484_{\pm.071}$ & $.920_{\pm.000}$ & $1.260_{\pm.144}$ & $1.010_{\pm.094}$ & $2.051_{\pm.271}$ & $.566_{\pm.032}$ & $.523_{\pm.033}$ & $.144_{\pm.003}$ \\
W-DRO & $.437_{\pm.000}$ & $.488_{\pm.079}$ & $.920_{\pm.000}$ & $1.200_{\pm.060}$ & $1.027_{\pm.086}$ & $1.764_{\pm.298}$ & $.587_{\pm.022}$ & $.522_{\pm.033}$ & $.156_{\pm.004}$ \\
KL-DRO & $\underline{.470_{\pm.034}}$ & $\underline{.508_{\pm.048}}$ & $.920_{\pm.000}$ & $\underline{.789_{\pm.042}}$ & $\mathbf{.913_{\pm.060}}$ & $1.795_{\pm.202}$ & $.528_{\pm.010}$ & $.505_{\pm.010}$ & $\mathbf{.140_{\pm.003}}$ \\
W-DFO & $.437_{\pm.000}$ & $.488_{\pm.078}$ & $.920_{\pm.000}$ & $1.240_{\pm.113}$ & $1.027_{\pm.086}$ & $1.882_{\pm.414}$ & $.591_{\pm.027}$ & $.522_{\pm.033}$ & $.156_{\pm.004}$ \\
KL-DFO & $.457_{\pm.023}$ & $.489_{\pm.077}$ & $.920_{\pm.000}$ & $1.437_{\pm.263}$ & $1.011_{\pm.096}$ & $2.001_{\pm.768}$ & $.592_{\pm.047}$ & $.523_{\pm.033}$ & $.144_{\pm.007}$ \\
LS & $.457_{\pm.020}$ & $.505_{\pm.048}$ & $.920_{\pm.000}$ & $.907_{\pm.016}$ & $.951_{\pm.049}$ & $\mathbf{.977_{\pm.041}}$ & $.554_{\pm.021}$ & $\mathbf{.502_{\pm.011}}$ & $.155_{\pm.008}$ \\
LR & $.462_{\pm.023}$ & $\mathbf{.508_{\pm.046}}$ & $.920_{\pm.000}$ & $.969_{\pm.060}$ & $.988_{\pm.089}$ & $1.026_{\pm.040}$ & $\underline{.521_{\pm.011}}$ & $\mathbf{.502_{\pm.011}}$ & $.146_{\pm.003}$ \\
\midrule
\rowcolor{gray!12} $\hookrightarrow$~\textsc{Joint} & $\mathbf{.479_{\pm.039}}$ & $.505_{\pm.056}$ & $.920_{\pm.000}$ & $\mathbf{.753_{\pm.006}}$ & $\underline{.918_{\pm.055}}$ & $\underline{.992_{\pm.135}}$ & $\mathbf{.517_{\pm.005}}$ & $\underline{.504_{\pm.009}}$ & $\underline{.142_{\pm.002}}$ \\
\bottomrule
\end{tabular}%
}
\end{table}
\subsubsection{NLP models.}    
We embed full conversations using \texttt{reward-model-deberta-v3-large-v2}~\cite{he2021deberta}. The CLS token representation of the final layer is extracted, yielding
1024-dimensional $\ell_2$-normalized embeddings.

\paragraph{Bradley--Terry model.}
Given a dataset of pairwise human preferences, we fit a Bradley--Terry model~\citep{bradley_rank_1952}.
Each item $i$ is assigned a latent scalar utility $s_i \in \mathbb{R}$, and the probability that item $a$ is preferred over item $b$ is

\begin{equation}
    P(a \succ b) = \sigma\!\bigl(s_a - s_b\bigr) = \frac{e^{s_a}}{e^{s_a} + e^{s_b}},
\end{equation}

where $\sigma$ denotes the sigmoid function.
Given a binary preference label $y = \mathbf{1}[a \succ b]$, the model is trained by minimizing the pairwise binary cross-entropy

\begin{equation}
    \mathcal{L}_{\mathrm{BT}} = \mathrm{BCE}\!\bigl(\sigma(s(\mathbf{e}_a) - s(\mathbf{e}_b)),\, y\bigr),
    \label{eq:bt-loss}
\end{equation}

where $\mathbf{e}_a, \mathbf{e}_b \in \mathbb{R}^d$ are fixed embeddings of the two items.
Pairs are randomly flipped during data loading to prevent the model from exploiting position bias.

\paragraph{Siamese score network.}
The utility function $s : \mathbb{R}^d \to \mathbb{R}$ is shared between the two branches (Siamese structure).
Each training sample is stored as the concatenation $[\mathbf{e}_a;\, \mathbf{e}_b] \in \mathbb{R}^{2d}$; at forward time the two halves are scored independently and subtracted.
The score network is a two-layer MLP
Mixup is excluded because interpolating pairwise binary preference labels is semantically ill-defined.

\subsection{Training Procedure}    All models are trained using Adam optimization \cite{KingmaB14Adam} and PyTorch Lightning. All modalities use a batch size of 512. 
Training runs for up to 50 epochs (tabular), 100 epochs (vision), and 150 epochs (NLP). Hyperparameter optimization (HPO) uses shorter budgets of 50 epochs per trial for tabular and 
  vision, and 100 epochs per trial for NLP. 
Hyperparameters are tuned per method and dataset using Optuna~\cite{optuna_2019} with the TPE sampler. We fix the number of trials to 50 regardless of the number of tunable hyperparameters. HPO is run separately for both ID and OOD validation splits. 

\subsection{Evaluation Metrics}

We report the following metrics across all tasks.

\paragraph{Accuracy.}
Standard fraction of correctly classified examples:
\begin{equation}
    \mathrm{Acc} = \frac{1}{n}\sum_{i=1}^{n} \mathbf{1}[\hat{y}_i = y_i].
\end{equation}

\paragraph{Brier score.}
Mean squared error between the predicted class probabilities $\mathbf{p}_i \in \Delta^K$ and the one-hot target $\mathbf{y}_i$~\citep{brier1950verification}:
\begin{equation}
    \mathrm{BS} = \frac{1}{n}\sum_{i=1}^{n} \|\mathbf{p}_i - \mathbf{y}_i\|_2^2.
\end{equation}
Lower is better; the Brier score jointly penalizes miscalibration and misclassification.

\paragraph{CVaR (10\%).}
The Conditional Value-at-Risk at level $\alpha = 0.9$~\citep{rockafellar_cvar} is the expected per-sample loss in the worst $10\%$ tail:
\begin{equation}
    \mathrm{CVaR}_{0.9}(\ell) = \frac{1}{n(1-0.9)}\sum_{i \in \mathcal{T}_{0.9}} \ell_i,
\end{equation}
where $\mathcal{T}_{0.9}$ indexes the $\lceil 0.1\, n \rceil$ samples with the highest individual losses $\ell_i$.
CVaR provides a distributional robustness signal that is more sensitive to hard or tail examples than average loss.

\paragraph{OOD accuracy.}
For tasks with held-out distribution-shifted test splits, we report accuracy on those splits directly.
This measures generalization under covariate shift without access to the shifted distribution at training time.

\subsection{Hardware Specification}
Experiments were conducted using 2 CPU cores and 8 GiB of RAM. The HPC nodes utilized for the computations are equipped with two AMD Milan563 7763 processors and a total of 256 GiB of main memory.
CIFAR-10 experiments were conducted on an RTX4070.

\section{Proofs}\label{apx:proofs_recovery}

\subsection{Recovery of Existing Methods}

We now prove that the unified framework~\eqref{eq:robust_framework} recovers each
of the methods listed in Table~\ref{tab:special-cases} as a special case.

\begin{proposition}[Recovery of Label Smoothing]
\label{prop:label-smoothing}
Consider the robust framework~\eqref{eq:robust_framework}.
Let (i) the reference distribution be defined with smoothed labels in the $y$-space:
\[
P_{\mathrm{ref}}
=
\tilde{P}_{\mathrm{ls}}
=
\frac{1}{n} \sum_{i=1}^n \delta_{x_i} \otimes \tilde{y}_i,
\qquad
\tilde{y}_i
=
(1-\alpha)\, y_i
+
\alpha\, \frac{\mathbf{1}}{K},
\]
where $y_i$ are one-hot labels, $\alpha \in [0,1]$ is the smoothing parameter,
$K$ is the number of classes, and $\mathbf{1}/K$ denotes the uniform distribution
over labels; and (ii) the ambiguity set be a singleton:
\[
\U(P_{\mathrm{ref}}) = \{ P_{\mathrm{ref}} \}.
\]
Then the resulting optimization problem reduces to
\[
\hat{h}
=
\argmin_{h\in\Hspace}
\mathbb{E}_{(x,y)\sim \tilde{P}_{\mathrm{ls}}}
\bigl[ \ell(h(x), y) \bigr],
\]
which is exactly empirical risk minimization with label
smoothing~\citep{szegedyRethinkingInceptionArchitecture2016}.
\end{proposition}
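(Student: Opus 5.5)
The plan is to unfold the definitions in \eqref{eq:robust_framework} directly; this is essentially a bookkeeping argument. First, fix the interpretation of the reference measure. The notation $\delta_{x_i}\otimes\tilde y_i$ is read as the product of the Dirac measure at $x_i$ with the categorical distribution on $\mathcal{Y}=\{1,\dots,K\}$ whose probability vector is $\tilde y_i$. For this to be a probability measure we need $\tilde y_i\in\Delta^{K}$: its entries are $(1-\alpha)y_{ik}+\alpha/K\ge 0$ and they sum to $(1-\alpha)+\alpha=1$. Hence $\tilde P_{\mathrm{ls}}\in\mathcal{P}(\mathcal{X}\times\mathcal{Y})$ is a valid reference distribution.

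Second, I will handle the disambiguation step. Since $\U(P_{\mathrm{ref}})=\{\tilde P_{\mathrm{ls}}\}$ is a singleton, any admissible aggregation $\agg$ (whether $\sup$, $\inf$, or any operator that is idempotent on constant families) returns the single value $\mathbb{E}_{\tilde P_{\mathrm{ls}}}[\ell(h(x),y)]$. This matches the table's entry "$-$" for $\Phi$. Consequently \eqref{eq:robust_framework} becomes $\argmin_h \mathbb{E}_{(x,y)\sim\tilde P_{\mathrm{ls}}}[\ell(h(x),y)]$, which is the displayed reduction.

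Third, I will identify this objective with label smoothing as used in practice. Linearity of expectation over the mixture gives
\[
\mathbb{E}_{\tilde P_{\mathrm{ls}}}[\ell(h(x),y)]=\frac1n\sum_{i=1}^n\sum_{k=1}^K \tilde y_{ik}\,\ell(h(x_i),k).
\]
For the cross-entropy loss $\ell(h(x),k)=-\log h(x)_k$, this equals $\frac1n\sum_i \mathrm{CE}(\tilde y_i,h(x_i))$. That is exactly the smoothed-target objective of \citet{szegedyRethinkingInceptionArchitecture2016}, with the targets $\tilde y$ matching those recalled in the appendix background.

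The main obstacle is only interpretive. If the loss is instead defined on soft targets $\ell(h(x),\tilde y)$ (for instance the squared loss of Section~\ref{sec:robustness_as_regularization}), then $\tilde P_{\mathrm{ls}}$ should be read as placing a Dirac mass at the soft label $\tilde y_i$. In that case the expectation is literally $\frac1n\sum_i\ell(h(x_i),\tilde y_i)$, so no linearity argument is needed. I would state both readings and note that they coincide whenever $\ell$ is affine in the target distribution, as cross-entropy is.
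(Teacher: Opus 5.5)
Your proposal is correct and follows essentially the same route as the paper's proof. Both arguments use two steps: the singleton ambiguity set makes the aggregation trivial for any choice of $\Phi$, and expanding the expectation over $\tilde{P}_{\mathrm{ls}}$, using that cross-entropy is linear in its target, yields $\frac{1}{n}\sum_i \ell(h(x_i),\tilde{y}_i)$; your extra check that $\tilde{y}_i$ lies in the simplex and your remark on the Dirac-at-soft-label reading are harmless refinements that the paper leaves implicit.
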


\begin{proof}
Since $\U(P_{\mathrm{ref}}) = \{P_{\mathrm{ref}}\}$ is a singleton,
the aggregation over the ambiguity set is trivial regardless of the choice of
$\agg$:
\[
\agg_{Q \in \{P_{\mathrm{ref}}\}}
\mathbb{E}_{(x,y)\sim Q}[\ell(h(x),y)]
=
\mathbb{E}_{(x,y)\sim P_{\mathrm{ref}}}[\ell(h(x),y)].
\]
Substituting $P_{\mathrm{ref}} = \tilde{P}_{\mathrm{ls}} = \frac{1}{n}\sum_{i=1}^n
\delta_{x_i} \otimes \tilde{y}_i$, the expectation expands as
\[
\mathbb{E}_{(x,y)\sim \tilde{P}_{\mathrm{ls}}}[\ell(h(x),y)]
=
\frac{1}{n}\sum_{i=1}^n \mathbb{E}_{y \sim \tilde{y}_i}[\ell(h(x_i), y)]
=
\frac{1}{n}\sum_{i=1}^n \ell(h(x_i), \tilde{y}_i),
\]
where the last equality uses linearity of the loss in its second argument
(as is standard for cross-entropy with soft targets).
The minimizer of this objective is exactly $\hat{h}_{\mathrm{ERM}}$ trained
with smoothed labels $\tilde{y}_i$, which is by definition label smoothing.
\end{proof}

\begin{proposition}[Recovery of Label Relaxation]
\label{prop:label-relaxation}
Consider the robust framework~\eqref{eq:robust_framework} instantiated with
(i) the empirical distribution $P_{\mathrm{ref}} = \Pemp$,
(ii) an ambiguity set over labels defined via total variation with radius $\alpha$:
\[
\U(P_{\mathrm{ref}})
=
\Bigl\{
Q \in \mathcal{P}(\mathcal{X}\times\mathcal{Y})
:
\mathrm{TV}(Q_{\mathcal{Y}|x_i},\, \delta_{y_i}) \le \alpha
\;\;\forall\, i
\Bigr\},
\]
and (iii) the optimistic (infimum) aggregation.
Then the resulting optimization exactly recovers label
relaxation~\citep{DBLP:conf/aaai/LienenH21}:
\[
\hat{h}_{\mathrm{relax}}
=
\argmin_{h\in\Hspace}\;
\inf_{Q \in \U(P_{\mathrm{ref}})}
\mathbb{E}_{(x,y)\sim Q}[\ell(h(x),y)].
\]
\end{proposition}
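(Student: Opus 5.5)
The plan is to show that the inner infimum over $\U(P_{\mathrm{ref}})$ splits into independent per-sample infima. Each of these coincides with the credal-set loss of label relaxation, which is $\min_{p\in Q_i^\alpha}$ of a divergence between the target and the prediction.

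\textbf{Step 1: restrict and parameterize.} First I would note that every $Q$ in the ambiguity set keeps the $x$-marginal of $\Pemp$. The constraint only concerns the conditionals $Q_{\mathcal{Y}|x_i}$, so the natural reading is that $Q_{\mathcal{X}}=\frac1n\sum_i\delta_{x_i}$. With this reading,
\[
Q=\frac1n\sum_{i=1}^n\delta_{x_i}\otimes p_i,\qquad p_i\in\mathcal{P}(\mathcal{Y}),
\]
and the $p_i$ can be chosen freely subject to $\mathrm{TV}(p_i,\delta_{y_i})\le\alpha$.

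Next I would compute this total variation directly: $\mathrm{TV}(p,\delta_{y_i})=1-p(y_i)=\sum_{y\ne y_i}p(y)$. Hence the constraint set for each $p_i$ is exactly the credal set $Q_i^\alpha$ of label relaxation recalled in the appendix.

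\textbf{Step 2: decouple.} The expectation is
\[
\mathbb{E}_Q[\ell(h(x),y)]=\frac1n\sum_i \ell(h(x_i),p_i),
\]
where I interpret the loss with a soft target $p_i$ as in the proof of Prop.~\ref{prop:label-smoothing}, i.e., the target enters as a distribution. The constraints act on each $p_i$ separately, so the infimum over $Q$ factorizes as
\[
\inf_{Q\in\U}\mathbb{E}_Q[\ell]=\frac1n\sum_i\inf_{p_i\in Q_i^\alpha}\ell(h(x_i),p_i).
\]
This is the standard interchange of an infimum with a sum over a product set.

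\textbf{Step 3: identify the loss.} I would take $\ell(h(x),p)=\KL(p\,\|\,\hat p_h(x))$, where $\hat p_h(x)$ is the model's predicted distribution. Then each summand is $L^{\mathrm{opt}}(Q_i^\alpha,\hat p_h(x_i))=\min_{p\in Q_i^\alpha}\KL(p\|\hat p)$, which is precisely the label relaxation loss. The minimum is attained because $Q_i^\alpha$ is compact and KL is lower semicontinuous, so the infimum is a minimum. Taking $\argmin_h$ then yields $\hat h_{\mathrm{relax}}$.

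\textbf{Main obstacle.} The delicate point is the choice of loss. With the plain cross-entropy $\mathbb{E}_{y\sim p}[-\log\hat p(y)]$, which is linear in $p$, the infimum would move the mass $\alpha$ onto the most-predicted class. That differs from Lienen and Hüllermeier's KL-projection loss, which is zero inside $Q_i^\alpha$. I would therefore state explicitly that the soft-target loss is KL; it differs from cross-entropy by the entropy term $-H(p)$, which is not constant over $Q_i^\alpha$. As a consistency check, I would verify that the closed form reproduces theirs: zero if $\hat p\in Q_i^\alpha$, and otherwise $\KL$ to the projection that rescales the non-$y_i$ mass of $\hat p$ to $\alpha$. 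A secondary point is to justify fixing the $x$-marginal. If the set also allowed changes to $Q_{\mathcal{X}}$, the conditional constraints at the $x_i$ would say nothing about mass placed elsewhere. So I would make this restriction part of the definition of $\U$.
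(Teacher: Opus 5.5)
Your Steps~1--2 are the paper's proof. The paper expands the expectation over $\Pemp$, observes that the constraints act on each conditional separately, interchanges infimum and sum, and uses $\mathrm{TV}(q,\delta_{y_i})=\sum_{y\neq y_i}q(y)$ to arrive at $\mathcal{Q}_i^\alpha$. Fixing $Q_{\mathcal{X}}=\frac1n\sum_i\delta_{x_i}$ is an assumption the paper makes silently when it writes the expectation as $\frac1n\sum_i\mathbb{E}_{y\sim Q_{\mathcal{Y}|x_i}}[\cdot]$, and you are right that it is needed. The paper then stops at $\frac1n\sum_i\inf_{q_i\in\mathcal{Q}_i^\alpha}\mathbb{E}_{y\sim q_i}[\ell(h(x_i),y)]$ and declares this to be the label relaxation objective, with $\ell$ unspecified. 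Your Step~3 correctly sees that this identification is where the real content lies, and your cross-entropy computation shows why it is not automatic.

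The gap is in how you close it. The identity $\mathbb{E}_Q[\ell(h(x),y)]=\frac1n\sum_i\KL(p_i\,\|\,\hat p_h(x_i))$ cannot hold for a loss $\ell:\mathcal{Y}\times\mathcal{Y}\to\mathbb{R}$ as in \eqref{eq:robust_framework}. For $Q=\frac1n\sum_i\delta_{x_i}\otimes p_i$ the left-hand side equals $\frac1n\sum_i\sum_y p_i(y)\,\ell(h(x_i),y)$, which is linear in each $p_i$. This is exactly the linearity that the proof of Proposition~\ref{prop:label-smoothing}, which you invoke, relies on. By contrast, $p\mapsto\KL(p\,\|\,\hat p)=\mathbb{E}_{y\sim p}[-\log\hat p(y)]-H(p)$, with $H$ the Shannon entropy, is strictly convex. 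So it cannot agree with a linear function on $\mathcal{Q}_i^\alpha$ once $\alpha>0$.

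Declaring the soft-target loss to be KL is therefore not a reading of \eqref{eq:robust_framework} but a change of it, and you must state it as an extra hypothesis. There are two natural ways to do this:
\begin{enumerate}
\item Replace the risk by the conditional-level functional $\frac1n\sum_i\KL(Q_{\mathcal{Y}|x_i}\,\|\,\hat p_h(x_i))$. This is an expectation under $Q$ of the $Q$-dependent integrand $\log Q(y\mid x)-\log\hat p_h(x)(y)$.
\item Lift the label space to $\mathcal{P}(\mathcal{Y})$, so that each $p_i$ is a point and $\KL$ is a genuine pointwise loss. Then rewrite $\U$ as the set of lifted measures with $x$-marginal $\frac1n\sum_i\delta_{x_i}$ whose conditional at $x_i$ is supported in $\mathcal{Q}_i^\alpha$, instead of a TV bound on $Q_{\mathcal{Y}|x_i}$.
\end{enumerate}
Under either modification, your Steps~1--3 give a complete proof: the product-set interchange, attainment by compactness and lower semicontinuity, and the Lienen--H\"ullermeier closed form. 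Without it, the final identification is unsupported, both in your Step~3 and in the last line of the paper's proof. For the log loss it is in fact false, as your own computation shows.
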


\begin{proof}
Substituting into the framework and expanding the expectation over
$\Pemp = \frac{1}{n}\sum_{i=1}^n \delta_{(x_i,y_i)}$:
\[
\inf_{Q \in \U(P_{\mathrm{ref}})}
\mathbb{E}_{(x,y)\sim Q}[\ell(h(x),y)]
=
\inf_{Q \in \U(P_{\mathrm{ref}})}
\frac{1}{n}\sum_{i=1}^n
\mathbb{E}_{y \sim Q_{\mathcal{Y}|x_i}}[\ell(h(x_i), y)].
\]
Since the constraint on $Q$ decouples across data points---each
$Q_{\mathcal{Y}|x_i}$ is constrained independently via
$\mathrm{TV}(Q_{\mathcal{Y}|x_i}, \delta_{y_i}) \le \alpha$---the
infimum factors as
\[
=
\frac{1}{n}\sum_{i=1}^n
\inf_{\substack{q_i \in \mathcal{P}(\mathcal{Y}) \\
\mathrm{TV}(q_i,\, \delta_{y_i}) \le \alpha}}
\mathbb{E}_{y \sim q_i}[\ell(h(x_i), y)].
\]
The total variation constraint $\mathrm{TV}(q_i, \delta_{y_i}) \le \alpha$
is equivalent to $\sum_{y \neq y_i} q_i(y) \le \alpha$, which is precisely
the definition of the label ambiguity set
$\mathcal{Q}_i^\alpha = \{p \in \mathcal{P}(\mathcal{Y}) : \sum_{y \neq y_i}
p(y) \le \alpha\}$ of \citet{DBLP:conf/aaai/LienenH21}.
The objective therefore becomes
\[
\frac{1}{n}\sum_{i=1}^n
\inf_{q_i \in \mathcal{Q}_i^\alpha}
\mathbb{E}_{y \sim q_i}[\ell(h(x_i), y)],
\]
which is exactly the label relaxation objective.
\end{proof}

\begin{proposition}[Recovery of DRO]
\label{prop:dro}
Consider the robust framework~\eqref{eq:robust_framework} instantiated with
(i) the empirical distribution $P_{\mathrm{ref}} = \Pemp$,
(ii) an ambiguity set defined via a statistical distance $\Delta$
(e.g.\ Wasserstein) with radius $\varrho$:
\[
\U(P_{\mathrm{ref}})
=
\{Q \in \mathcal{P}(\mathcal{X}\times\mathcal{Y})
: \Delta(Q,\, \Pemp) \le \varrho\},
\]
and (iii) the pessimistic (supremum) aggregation.
Then the resulting optimization exactly recovers distributionally robust
optimization~\citep{blanchetDistributionallyRobustOptimization2025}:
\[
\hat{h}_{\mathrm{DRO}}
=
\argmin_{h\in\Hspace}
\sup_{Q \in \U(\Pemp)}
\mathbb{E}_{(x,y)\sim Q}[\ell(h(x),y)].
\]
\end{proposition}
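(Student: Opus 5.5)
The argument is a direct substitution into the general objective~\eqref{eq:robust_framework}, in the same style as the proofs of Propositions~\ref{prop:label-smoothing} and~\ref{prop:label-relaxation}. I will fix the three axis choices one at a time and show that the resulting expression is, symbol for symbol, the DRO objective of \citet{blanchetDistributionallyRobustOptimization2025}.

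First, set $P_{\mathrm{ref}} = \Pemp$. The generic ambiguity set $\U(\Pref) = \{Q : \Delta(Q,\Pref)\le\varrho\}$ then becomes exactly the ball $\{Q : \Delta(Q,\Pemp)\le\varrho\}$ used in DRO. For the Wasserstein case I will also record the standard observation that $Q$ ranges over all of $\mathcal{P}(\mathcal{X}\times\mathcal{Y})$, not only over distributions supported on the data. This matches the DRO formulation and, unlike the label relaxation case, requires no decoupling across samples.

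Second, instantiate $\agg = \sup$. Inserting this into~\eqref{eq:robust_framework} gives
\[
\argmin_{h\in\Hspace}\ \sup_{Q\in\U(\Pemp)} \mathbb{E}_{(x,y)\sim Q}[\ell(h(x),y)],
\]
which coincides with $\hat h_{\mathrm{DRO}}$ as defined in the Appendix background. The two problems have the same inner value function for every $h$, and therefore the same set of minimizers.

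The only step needing care is well-definedness. The inner supremum may be $+\infty$, for instance for Wasserstein balls with an unbounded loss whose growth exceeds the cost exponent $p$. I will handle this by stating that the equality holds as an identity of extended-real-valued objectives: both formulations are infinite or finite together, so the argmin sets agree whenever the DRO problem is well posed. I will also remark that the special case $\varrho=0$ collapses the ball to $\{\Pemp\}$ and recovers ERM, which is consistent with the discussion following~\eqref{eq:robust_framework}. I do not expect any further obstacle, since the claim is definitional.
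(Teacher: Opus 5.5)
Your proposal is correct and follows the same route as the paper, whose proof is a one-line direct substitution of $P_{\mathrm{ref}} = \Pemp$, the $\Delta$-ball, and $\agg = \sup$ into~\eqref{eq:robust_framework}. Your extra remarks on extended-real-valued objectives and the $\varrho = 0$ limit are accurate. They are optional additions, since both sides are literally the same function of $h$.
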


\begin{proof}
Direct substitution of $P_{\mathrm{ref}} = \Pemp$, $\U(P_{\mathrm{ref}})$
as defined above, and $\agg = \sup$ into the unified
framework~\eqref{eq:robust_framework} immediately yields the DRO objective.
\end{proof}

\begin{proposition}[Recovery of DFO]
\label{prop:dfo}
Consider the robust framework~\eqref{eq:robust_framework} instantiated with
(i) the empirical distribution $P_{\mathrm{ref}} = \Pemp$,
(ii) an ambiguity set defined via a statistical distance $\Delta$ with
radius $\varrho$ as in Proposition~\ref{prop:dro},
and (iii) the optimistic (infimum) aggregation.
Then the resulting optimization exactly recovers distributionally favorable
optimization~\citep{jiangDistributionallyFavorableOptimization2024a}:
\[
\hat{h}_{\mathrm{DFO}}
=
\argmin_{h\in\Hspace}
\inf_{Q \in \U(\Pemp)}
\mathbb{E}_{(x,y)\sim Q}[\ell(h(x),y)].
\]
\end{proposition}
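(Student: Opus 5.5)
The plan is to follow Proposition~\ref{prop:dro} almost verbatim, since the statement is a definitional specialization of the general objective~\eqref{eq:robust_framework}. I will substitute the three choices of the statement into~\eqref{eq:robust_framework} one at a time. First, setting $P_{\mathrm{ref}} = \Pemp$ fixes the anchor of the ambiguity set. Then $\U(P_{\mathrm{ref}}) = \U(\Pemp) = \{Q : \Delta(Q,\Pemp) \le \varrho\}$ becomes exactly the statistical ball used by \citet{jiangDistributionallyFavorableOptimization2024a}. Finally, the disambiguation principle $\agg$ is the infimum over this set.

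The resulting expression is
\[
\argmin_{h\in\Hspace}\; \inf_{Q \in \U(\Pemp)} \mathbb{E}_{(x,y)\sim Q}[\ell(h(x),y)],
\]
which is verbatim the DFO objective recalled in the extended background. This gives $\hat{h}_{\mathrm{DFO}}$.

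Two sanity remarks will make the identification watertight. First, the inner infimum is well defined and finite whenever $\ell$ is bounded below, since $\Pemp \in \U(\Pemp)$ for every $\varrho \ge 0$ and the set is therefore nonempty. Second, the identification is of objective functions, so the two $\argmin$ sets coincide as sets; no attainment of the infimum over $Q$ is needed.

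The step most likely to cause trouble is not a calculation but conventions. I will check that the distance $\Delta$ and radius $\varrho$ in Proposition~\ref{prop:dro} are exactly those of the DFO reference, for example whether the ball is in $W_p$ or its $p$-th power, and whether it is closed or open. I will treat these as part of the instantiation, so the proof is immediate by direct substitution.
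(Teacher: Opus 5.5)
Your proposal is correct and matches the paper's proof, which is exactly the direct substitution of $P_{\mathrm{ref}} = \Pemp$, the ball $\U(\Pemp)$ from Proposition~\ref{prop:dro}, and $\agg = \inf$ into~\eqref{eq:robust_framework}. Your extra remarks (non-emptiness of the ball via $\Pemp \in \U(\Pemp)$, finiteness of the inner infimum, and coincidence of the $\argmin$ sets) are sound additions that the paper does not state.
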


\begin{proof}
Direct substitution of $P_{\mathrm{ref}} = \Pemp$, $\U(P_{\mathrm{ref}})$
as in Proposition~\ref{prop:dro}, and $\agg = \inf$ into the unified
framework~\eqref{eq:robust_framework} immediately yields the DFO objective.
\end{proof}

\begin{proposition}[Recovery of Mixup]
\label{prop:mixup}
Consider the robust framework~\eqref{eq:robust_framework} instantiated with
(i) the reference distribution set to the Mixup vicinal distribution,
\[
P_{\mathrm{ref}}
=
\tilde{P}_{\mathrm{mix}}
=
\frac{1}{n^2}\sum_{i,j=1}^n \nu_\lambda(\,\cdot \mid (x_i,y_i),(x_j,y_j)),
\]
where $\nu_\lambda$ generates Mixup interpolations
$(\tilde{x},\tilde{y}) = \lambda(x_i,y_i) + (1-\lambda)(x_j,y_j)$
with $\lambda \sim \mathrm{Beta}(\alpha,\alpha)$,
and (ii) a singleton ambiguity set $\U(P_{\mathrm{ref}}) = \{P_{\mathrm{ref}}\}$.
Then the resulting optimization exactly recovers training with
Mixup~\citep{zhangMixupEmpiricalRisk2018}:
\[
\hat{h}_{\mathrm{Mixup}}
=
\argmin_{h \in \Hspace}
\mathbb{E}_{(x,y)\sim \tilde{P}_{\mathrm{mix}}}[\ell(h(x),y)].
\]
\end{proposition}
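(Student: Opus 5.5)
The argument mirrors the proof of Proposition~\ref{prop:label-smoothing}: collapse the aggregation over the singleton ambiguity set, then identify the resulting expectation with the Mixup training objective. First, since $\U(P_{\mathrm{ref}}) = \{\tilde{P}_{\mathrm{mix}}\}$ is a singleton, any disambiguation principle $\agg$ ($\sup$, $\inf$, or a neutral average) acts trivially:
\[
\agg_{Q \in \{\tilde{P}_{\mathrm{mix}}\}} \mathbb{E}_{(x,y)\sim Q}[\ell(h(x),y)] = \mathbb{E}_{(x,y)\sim \tilde{P}_{\mathrm{mix}}}[\ell(h(x),y)].
\]
Substituting this into~\eqref{eq:robust_framework} gives the displayed $\argmin$ directly. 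So the first claim is immediate.

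The substantive step is to check that this expectation is the objective Mixup actually optimizes. I will expand the mixture definition of $\tilde{P}_{\mathrm{mix}}$ using linearity of expectation over mixtures of measures:
\[
\mathbb{E}_{\tilde{P}_{\mathrm{mix}}}[\ell(h(x),y)] = \frac{1}{n^2}\sum_{i,j=1}^n \mathbb{E}_{\lambda\sim\mathrm{Beta}(\alpha,\alpha)}\bigl[\ell\bigl(h(\lambda x_i+(1-\lambda)x_j),\, \lambda y_i+(1-\lambda)y_j\bigr)\bigr].
\]
This is the population form of the Mixup loss of \citet{zhangMixupEmpiricalRisk2018}. In their procedure, a pair $(i,j)$ is drawn uniformly (with replacement) from the data, $\lambda$ is drawn from $\mathrm{Beta}(\alpha,\alpha)$, and the loss is evaluated on the interpolated pair. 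Stochastic training with random mini-batch pairings is an unbiased Monte Carlo estimate of exactly this double sum.

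The one point I expect to need care, and the main obstacle, is the meaning of $\ell(h(\tilde x),\tilde y)$ when $\tilde y$ is a soft (convex-combination) label. I will use the same convention as in the proof of Proposition~\ref{prop:label-smoothing}: the loss is linear in its target argument, as with cross-entropy with soft targets. Under this convention, $\tilde{P}_{\mathrm{mix}}$ may be read either way. One reading places a Dirac mass at the soft label $\tilde y$. The other treats $\tilde y$ as a categorical label distribution, in which case $\mathbb{E}_{y\sim\tilde y}[\ell(h(\tilde x),y)] = \ell(h(\tilde x),\tilde y)$. Both readings yield the same objective.

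I will also note that the pairing measure ($1/n^2$ over ordered pairs, including $i=j$) matches Mixup's sampling convention. Any other pairing scheme, such as a random permutation within a batch, induces the same marginal pair distribution in expectation. With these identifications, the minimizers coincide, which proves the proposition.
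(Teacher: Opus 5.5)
Your proof is correct and follows the paper's own argument: the singleton ambiguity set trivializes $\Phi$. Expanding $\tilde{P}_{\mathrm{mix}}$ as the $\frac{1}{n^2}$ double sum, with the inner expectation over $\lambda\sim\mathrm{Beta}(\alpha,\alpha)$, then identifies the objective with the Mixup empirical risk of Zhang et al. Your side remarks are unnecessary, and the claim that in-batch permutation pairing gives the same pair distribution is not accurate (it puts mass $1/B$ rather than $1/n$ on the diagonal $i=j$, so it does not in general reproduce the $\frac{1}{n^2}$ measure), but the proposition fixes the pairing measure by definition, so nothing depends on it.
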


\begin{proof}
Since $\U(P_{\mathrm{ref}}) = \{P_{\mathrm{ref}}\}$ is a singleton,
the aggregation trivializes as in Proposition~\ref{prop:label-smoothing}:
\[
\agg_{Q \in \{\tilde{P}_{\mathrm{mix}}\}}
\mathbb{E}_{(x,y)\sim Q}[\ell(h(x),y)]
=
\mathbb{E}_{(x,y)\sim \tilde{P}_{\mathrm{mix}}}[\ell(h(x),y)].
\]
Expanding the expectation over $\tilde{P}_{\mathrm{mix}}$:
\[
\mathbb{E}_{(x,y)\sim \tilde{P}_{\mathrm{mix}}}[\ell(h(x),y)]
=
\frac{1}{n^2}\sum_{i,j=1}^n
\mathbb{E}_{\lambda \sim \mathrm{Beta}(\alpha,\alpha)}
\Bigl[
\ell\!\bigl(h(\lambda x_i + (1{-}\lambda)x_j),\;
\lambda y_i + (1{-}\lambda)y_j\bigr)
\Bigr],
\]
which is exactly the Mixup empirical risk of \citet{zhangMixupEmpiricalRisk2018}.
\end{proof}

\begin{proposition}[Recovery of VRM]
\label{prop:vrm}
Consider the robust framework~\eqref{eq:robust_framework} instantiated with
(i) the reference distribution set to the vicinal distribution,
\[
P_{\mathrm{ref}}
=
\tilde{P}_{\mathrm{vic}}
=
\frac{1}{n}\sum_{i=1}^n \nu(\,\cdot \mid x_i, y_i),
\]
and (ii) a singleton ambiguity set
$\U(P_{\mathrm{ref}}) = \{P_{\mathrm{ref}}\}$.
Then the resulting optimization exactly recovers vicinal risk
minimization~\citep{chapelleVicinalRiskMinimization2000}:
\[
\hat{h}_{\mathrm{VRM}}
=
\argmin_{h \in \Hspace}
\mathbb{E}_{(x,y)\sim \tilde{P}_{\mathrm{vic}}}[\ell(h(x),y)].
\]
\end{proposition}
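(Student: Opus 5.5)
The argument follows Proposition~\ref{prop:label-smoothing} and Proposition~\ref{prop:mixup} almost verbatim, since the only nontrivial ingredient is again the collapse of the disambiguation step on a singleton ambiguity set.

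\textbf{Step 1: the aggregation is trivial.} We substitute $\U(P_{\mathrm{ref}}) = \{\tilde{P}_{\mathrm{vic}}\}$ into the general objective~\eqref{eq:robust_framework}. Any reasonable disambiguation principle $\agg$ returns the value at the only element of a one-point set; this holds for $\sup$, for $\inf$, and for averaging or quantile-type rules. This holds regardless of stance, so
\[
\agg_{Q \in \{\tilde{P}_{\mathrm{vic}}\}} \mathbb{E}_{(x,y)\sim Q}[\ell(h(x),y)] = \mathbb{E}_{(x,y)\sim \tilde{P}_{\mathrm{vic}}}[\ell(h(x),y)]
\]
for every $h \in \Hspace$.

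\textbf{Step 2: expand the expectation.} Next we expand the expectation using the mixture form of $\tilde{P}_{\mathrm{vic}}$:
\[
\mathbb{E}_{\tilde{P}_{\mathrm{vic}}}[\ell(h(x),y)] = \frac{1}{n}\sum_{i=1}^n \mathbb{E}_{(\tilde x,\tilde y)\sim \nu(\cdot\mid x_i,y_i)}[\ell(h(\tilde x),\tilde y)].
\]
This is the vicinal risk of \citet{chapelleVicinalRiskMinimization2000}. Taking $\argmin_{h\in\Hspace}$ on both sides gives $\hat h_{\mathrm{VRM}}$, because the two objectives agree pointwise in $h$ and therefore have the same minimizer set.

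\textbf{Main obstacle.} The only point needing care is justifying the interchange of the expectation with the finite mixture in Step~2. Linearity of the integral with respect to the mixing measure handles this directly, provided each $\mathbb{E}_{\nu(\cdot\mid x_i,y_i)}[\ell(h(\tilde x),\tilde y)]$ is well defined. For the proposition, I would state a mild integrability assumption on $\ell\circ h$ under each vicinity distribution; for example, the squared loss with Gaussian $\nu$ has finite second moments. Alternatively, I would take $\ell \ge 0$, so the expectations exist in $[0,\infty]$ and linearity still holds. No other step involves any difficulty.
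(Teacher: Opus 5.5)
Your proposal is correct and follows the paper's proof. That proof is just your Step~1: the aggregation $\Phi$ over the singleton $\{\tilde{P}_{\mathrm{vic}}\}$ collapses to $\mathbb{E}_{(x,y)\sim\tilde{P}_{\mathrm{vic}}}[\ell(h(x),y)]$, which is read off directly as the VRM objective. Your Step~2 mixture expansion and the integrability caveat are harmless extra detail that the paper omits here; it does an analogous expansion only in the Mixup case.
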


\begin{proof}
Since $\U(P_{\mathrm{ref}}) = \{P_{\mathrm{ref}}\}$ is a singleton,
the aggregation trivializes as in Proposition~\ref{prop:label-smoothing}:
\[
\agg_{Q \in \{\tilde{P}_{\mathrm{vic}}\}}
\mathbb{E}_{(x,y)\sim Q}[\ell(h(x),y)]
=
\mathbb{E}_{(x,y)\sim \tilde{P}_{\mathrm{vic}}}[\ell(h(x),y)],
\]
which is exactly the VRM objective of \citet{chapelleVicinalRiskMinimization2000}.
\end{proof}

\subsection{Special cases}

We now show that for linear models many seemingly disparate robust learning methods reduce to empirical 
risk minimization with an additional regularization term.  

\subsubsection{Regression}
We consider a supervised regression problem with data
\(
\{(x_i,y_i)\}_{i=1}^n\) with \(x_i \in \mathbb{R}^d, \; y_i \in \mathbb{R}.
\)
We study linear models of the form
\(
f_w(x) = w^\top x,
\)
where $w \in \mathbb{R}^d$ denotes the parameter vector. The learning objective is based on the squared loss
\[
\ell(f_w(x),y) = (y - w^\top x)^2 .
\] The corresponding ERM objective is
\[
R_{\mathrm{ERM}}(w)
=
\frac{1}{n}
\sum_{i=1}^n
(y_i - w^\top x_i)^2 .
\]

\begin{proposition}\label{prop:special_case_vrm_regression}{Shown by \cite{chapelleVicinalRiskMinimization2000}.}
Assume the vicinal distribution is defined by
$
\tilde{x} = x + \epsilon$, $\epsilon \sim \mathcal{N}(0,\sigma^2 I).$
Then the VRM objective is equivalent to
\[
R_{\mathrm{VRM}}(w)
=
R_{\mathrm{ERM}}(w)
+
\sigma^2 \|w\|^2 .
\]
\end{proposition}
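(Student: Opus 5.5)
The plan is to compute the VRM objective directly, as an expectation over the Gaussian vicinity, and expand the square. By Proposition~\ref{prop:vrm}, with $\nu(\cdot\mid x_i,y_i)$ the law of $(x_i+\epsilon, y_i)$ and $\epsilon\sim\mathcal{N}(0,\sigma^2 I)$, the objective is
\[
R_{\mathrm{VRM}}(w)=\frac{1}{n}\sum_{i=1}^n \mathbb{E}_{\epsilon}\bigl[(y_i - w^\top x_i - w^\top \epsilon)^2\bigr].
\]
Everything therefore reduces to evaluating a single per-sample expectation and then averaging over $i$.

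For fixed $i$, write $r_i = y_i - w^\top x_i$, which is deterministic. Expanding the square gives three terms:
\[
r_i^2 \;-\; 2 r_i\, w^\top \epsilon \;+\; (w^\top\epsilon)^2 .
\]
The cross term vanishes in expectation because $\mathbb{E}[\epsilon]=0$ and $r_i$, $w$ do not depend on $\epsilon$. For the quadratic term, use
\[
\mathbb{E}[(w^\top\epsilon)^2] = w^\top \mathbb{E}[\epsilon\epsilon^\top] w = w^\top(\sigma^2 I)w = \sigma^2\|w\|^2 .
\]
Alternatively, note that $w^\top\epsilon \sim \mathcal{N}(0,\sigma^2\|w\|^2)$. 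The per-sample expectation is thus $r_i^2 + \sigma^2\|w\|^2$. Averaging over $i$ yields $R_{\mathrm{ERM}}(w) + \sigma^2\|w\|^2$, because the penalty does not depend on $i$. This is the claimed identity, and it holds exactly for every $w$, not just at the minimizer. Only zero mean and isotropic covariance $\sigma^2 I$ are used, so Gaussianity itself is inessential.

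There is no real analytic obstacle. The one point needing care is interpretive and concerns the intercept convention of Section~\ref{sec:robustness_as_regularization}, where the constant feature is appended to $x$. If the noise is applied to all coordinates, including the constant one, then $\|w\|^2$ includes the intercept exactly as stated. If instead, as is more natural, the constant coordinate is left unperturbed, the same computation with covariance $\sigma^2\,\mathrm{diag}(I_{d-1},0)$ gives the penalty $\sigma^2\|w_{\setminus 0}\|^2$ on the non-intercept weights only. I would state the result under the literal assumption $\epsilon\sim\mathcal{N}(0,\sigma^2 I)$ and add this remark.

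I would also mention that, with the expectation taken exactly rather than by Monte Carlo sampling, the resulting minimizer is the ridge solution
\[
w^\star = (X^\top X + n\sigma^2 I)^{-1}X^\top y .
\]
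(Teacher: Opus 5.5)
Your proof is correct and follows the paper's argument essentially line for line: you expand the square, kill the cross term via $\mathbb{E}[\epsilon]=0$, and evaluate $\mathbb{E}[(w^\top\epsilon)^2] = w^\top\mathbb{E}[\epsilon\epsilon^\top]w = \sigma^2\|w\|^2$. Your side remarks are accurate additions the paper does not make: only zero mean and isotropic covariance are actually used, and leaving the constant intercept feature unperturbed yields a penalty on the non-intercept weights only.
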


\begin{proof}
The VRM objective minimizes the expected loss under the vicinal distribution:
\[
R_{\mathrm{VRM}}(w)
=
\frac{1}{n}\sum_{i=1}^n
\mathbb{E}_{\epsilon}
\big[
(y_i - w^\top (x_i + \epsilon))^2
\big].
\]

Expanding the square yields
\[
(y_i - w^\top x_i - w^\top \epsilon)^2
=
(y_i - w^\top x_i)^2
-2(y_i - w^\top x_i)w^\top \epsilon
+
(w^\top \epsilon)^2.
\]

Taking expectations with respect to $\epsilon$,
the cross term vanishes since $\mathbb{E}[\epsilon]=0$.
Thus,

\[
\mathbb{E}
\big[
(y_i - w^\top (x_i + \epsilon))^2
\big]
=
(y_i - w^\top x_i)^2
+
\mathbb{E}[(w^\top \epsilon)^2].
\]

Since $\epsilon \sim \mathcal{N}(0,\sigma^2 I)$,

\[
\mathbb{E}[(w^\top \epsilon)^2]
=
w^\top \mathbb{E}[\epsilon \epsilon^\top] w
=
\sigma^2 w^\top w
=
\sigma^2 \|w\|^2.
\]

Substituting into the VRM objective yields

\[
R_{\mathrm{VRM}}(w)
=
\frac{1}{n}\sum_{i=1}^n (y_i - w^\top x_i)^2
+
\sigma^2 \|w\|^2,
\]

which proves the claim.
\end{proof}
 
\begin{proposition}\label{prop:special_case_mixup_regression}
Let the Mixup samples be defined as
$\tilde{x} = \lambda x_i + (1-\lambda)x_j$, $\tilde{y} = \lambda y_i + (1-\lambda)y_j$,
where $(i,j)$ are drawn independently and uniformly from $[n]$ and 
$\lambda \sim \mathrm{Beta}(\alpha,\alpha)$.
Let $c_\alpha = \mathbb{E}[\lambda(1-\lambda)]$. Then the Mixup objective satisfies
\[
R_{\mathrm{Mixup}}(w)
=
(1 - 2c_\alpha)\, R_{\mathrm{ERM}}(w)
+
2 c_\alpha\, \big(\bar{y} - w^\top \bar{x}\big)^2,
\]
where $\bar{x} = \tfrac{1}{n}\sum_i x_i$ and $\bar{y} = \tfrac{1}{n}\sum_i y_i$.
In particular, whenever the hypothesis class contains an intercept, 
$R_{\mathrm{Mixup}}$ and $R_{\mathrm{ERM}}$ share the same minimizers.
\end{proposition}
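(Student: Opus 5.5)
The plan is a direct computation in terms of residuals, followed by a short argument comparing minimizers. Write $r_k(w) := y_k - w^\top x_k$ and $\bar r(w) := \tfrac1n\sum_k r_k(w) = \bar y - w^\top \bar x$. The key observation is that the Mixup residual is linear in the pair:
\[
\tilde y - w^\top \tilde x = \lambda r_i + (1-\lambda) r_j .
\]
This holds because both $\tilde x$ and $\tilde y$ are formed with the same weight $\lambda$.

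\textbf{Step 1: expand the Mixup risk.} I will expand $R_{\mathrm{Mixup}}(w) = \mathbb{E}_{i,j,\lambda}\bigl[(\lambda r_i + (1-\lambda) r_j)^2\bigr]$ into three terms and use the independence of $i$, $j$ and $\lambda$:
\begin{itemize}
\item the $\lambda^2 r_i^2$ term contributes $\mathbb{E}[\lambda^2]\,R_{\mathrm{ERM}}(w)$;
\item the $(1-\lambda)^2 r_j^2$ term contributes $\mathbb{E}[(1-\lambda)^2]\,R_{\mathrm{ERM}}(w)$;
\item the cross term contributes $2\,\mathbb{E}[\lambda(1-\lambda)]\,\mathbb{E}_i[r_i]\,\mathbb{E}_j[r_j] = 2c_\alpha\,\bar r(w)^2$.
\end{itemize}

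\textbf{Step 2: collect the coefficients.} Since $\mathbb{E}[\lambda^2] + \mathbb{E}[(1-\lambda)^2] = \mathbb{E}[(\lambda + (1-\lambda))^2] - 2c_\alpha = 1 - 2c_\alpha$, this gives the claimed identity. Symmetry of the Beta distribution is not even needed here.

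\textbf{Step 3: the minimizer claim.} First, since $\lambda(1-\lambda) \le 1/4$, we have $c_\alpha \le 1/4$, so $1 - 2c_\alpha \ge 1/2 > 0$. Both terms of the identity are therefore nonnegative multiples of nonnegative functions. This gives the lower bound
\[
R_{\mathrm{Mixup}}(w) \ge (1-2c_\alpha)\,\min_v R_{\mathrm{ERM}}(v).
\]
Next, with an intercept $b$ among the coordinates of $w$ (constant feature $1$), every ERM minimizer $w^\star$ satisfies the first-order condition $\partial_b R_{\mathrm{ERM}} = -\tfrac2n \sum_k r_k(w^\star) = 0$. Hence $\bar r(w^\star) = 0$, and $w^\star$ attains the lower bound, so every ERM minimizer is a Mixup minimizer. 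Conversely, any Mixup minimizer must attain the bound. This forces $R_{\mathrm{ERM}}(w) = \min R_{\mathrm{ERM}}$ when $c_\alpha < 1/2$, which always holds. So the two sets of minimizers coincide.

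\textbf{Main obstacle.} The only delicate point is the converse direction in Step 3. It needs the positivity of $1 - 2c_\alpha$, which follows from $\lambda(1-\lambda) \le 1/4$. It also needs the fact that the intercept normal equation holds at every ERM minimizer, not just one, which is true because it is a first-order condition of a convex differentiable objective. The algebra in Steps 1 and 2 is routine.
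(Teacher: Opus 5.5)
Your proof is correct, and for the identity it follows the paper's route: both write the Mixup residual as $\lambda r_i + (1-\lambda) r_j$ and then average using the independence of $i$, $j$ and $\lambda$. The algebra differs only cosmetically. The paper uses $(\lambda a + (1-\lambda)b)^2 = \lambda a^2 + (1-\lambda)b^2 - \lambda(1-\lambda)(a-b)^2$ followed by $\mathbb{E}_{i,j}[(r_i-r_j)^2] = 2R_{\mathrm{ERM}}(w) - 2\bar r^2$. You instead expand the square directly and evaluate the cross term as $\mathbb{E}[r_i r_j] = \bar r^2$, which is one step shorter. As you note, neither route needs $\mathbb{E}[\lambda]=1/2$, although the paper invokes it.

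The real difference is in the minimizer claim. The paper shows that every ERM minimizer zeroes the second term, so ERM minimizers are Mixup minimizers. For the reverse direction it then appeals to uniqueness under a non-degeneracy assumption on the augmented feature matrix. Your lower-bound-and-attainment argument, using $1-2c_\alpha \ge 1/2 > 0$, proves both inclusions directly. It therefore gives equality of the full minimizer sets with no non-degeneracy hypothesis, which is what the statement literally asserts.
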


\begin{proof}
Let $r_i = y_i - w^\top x_i$. Then
\[
\tilde{y} - w^\top \tilde{x} \;=\; \lambda r_i + (1-\lambda) r_j,
\]
and using the identity 
$(\lambda a + (1-\lambda) b)^2 = \lambda a^2 + (1-\lambda) b^2 - \lambda(1-\lambda)(a-b)^2$,
\[
(\tilde{y} - w^\top \tilde{x})^2 
\;=\; 
\lambda r_i^2 + (1-\lambda) r_j^2 \;-\; \lambda(1-\lambda)(r_i - r_j)^2.
\]
Since $\mathbb{E}[\lambda] = \mathbb{E}[1-\lambda] = 1/2$ and 
$\mathbb{E}[\lambda(1-\lambda)] = c_\alpha$, taking expectation over 
$\lambda$ and $(i,j)$ gives
\[
R_{\mathrm{Mixup}}(w) 
\;=\; 
R_{\mathrm{ERM}}(w) \;-\; c_\alpha\, \mathbb{E}_{i,j}\!\left[(r_i - r_j)^2\right].
\]
Because $i$ and $j$ are independent and identically distributed,
\[
\mathbb{E}_{i,j}\!\left[(r_i - r_j)^2\right]
\;=\;
2\,\mathbb{E}_i[r_i^2] - 2\,(\mathbb{E}_i[r_i])^2
\;=\;
2\, R_{\mathrm{ERM}}(w) \;-\; 2\,(\bar{y} - w^\top \bar{x})^2,
\]
where the second equality uses $\bar r = \bar y - w^\top \bar x$. Substituting,
\[
R_{\mathrm{Mixup}}(w) 
\;=\; 
(1 - 2c_\alpha)\, R_{\mathrm{ERM}}(w) \;+\; 2c_\alpha\, (\bar{y} - w^\top \bar{x})^2.
\]
For the second claim, with intercept $b$ the second term becomes 
$(\bar{y} - w^\top \bar{x} - b)^2 \ge 0$, vanishing iff $\bar r = 0$, which is 
the first-order condition of $R_{\mathrm{ERM}}$ in $b$. Both terms are therefore 
minimized simultaneously by any minimizer of $R_{\mathrm{ERM}}$. Since 
$c_\alpha < 1/2$ for any $\mathrm{Beta}(\alpha,\alpha)$, the coefficient of 
$R_{\mathrm{ERM}}$ is strictly positive, so under standard non-degeneracy of the 
augmented feature matrix the minimizer is unique and coincides with that of 
$R_{\mathrm{ERM}}$.
\end{proof}

\begin{remark}\label{rem:mixup_generative}
Under the generative model $y_i = w_\star^\top x_i + \varepsilon_i$ with 
$\mathbb{E}[\varepsilon_i] = 0$, $\mathrm{Var}(\varepsilon_i) = \sigma^2$, and 
$\varepsilon \perp x$, taking expectation over the noise yields the equivalent form
\[
\mathbb{E}_\varepsilon\!\left[R_{\mathrm{Mixup}}(w)\right] 
\;=\; 
\mathbb{E}_\varepsilon\!\left[R_{\mathrm{ERM}}(w)\right] 
\;-\; c_\alpha\, (w - w_\star)^\top \Sigma_x (w - w_\star) 
\;+\; \mathrm{const},
\]
with $\Sigma_x = \tfrac{1}{n^2}\sum_{i,j}(x_i - x_j)(x_i - x_j)^\top$. The 
negative sign means Mixup contributes a concave bonus centered at $w_\star$ 
rather than a convex penalty, and the expected Mixup minimizer remains $w_\star$. 
This recovers the observation of \cite{CarratinoCJV22} that Mixup has no 
asymptotic effect on the linear least-squares minimizer.
\end{remark}

\begin{proposition}\label{prop:special_case_dro_regression}
Let $\varrho$ be the radius of the Wasserstein-$1$ ambiguity set with transport 
cost $\|x - x'\|_q$, and let $q^*$ denote the conjugate exponent. For sufficiently 
small $\varrho$,
\[
R_{\mathrm{DRO}}(w)
\;=\;
R_{\mathrm{ERM}}(w)
\;+\;
2\varrho\, \|w\|_{q^*}\, \cdot\, \frac{1}{n}\sum_{i=1}^n |y_i - w^\top x_i|
\;+\;
o(\varrho).
\]
The leading-order regularizer is the product of $\|w\|_{q^*}$ and the empirical 
mean absolute residual; both factors depend on $w$.
\end{proposition}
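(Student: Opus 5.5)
The plan is to invoke the small-radius first-order expansion of the Wasserstein dual \citep[Theorem 1]{gao}, as the paper does in Section~\ref{sec:robustness_as_regularization}. We then evaluate the resulting gradient-norm term explicitly for the squared loss and a linear model. This expansion replaces the distributional supremum by the empirical risk plus $\varrho$ times a norm of the input-gradient field, measured in the dual norm $\|\cdot\|_{q^*}$ of the transport cost:
\[
\sup_{Q:\,W_1(Q,\Pemp)\le\varrho}\mathbb{E}_Q[\ell]
\;=\;\mathbb{E}_{\Pemp}[\ell]\;+\;\varrho\,\mathcal{V}_{\Pemp}\bigl(\|\nabla_x \ell\|_{q^*}\bigr)\;+\;o(\varrho).
\]
Following the paper's instance-wise gradient-norm surrogate, we take $\mathcal{V}_{\Pemp}$ to be the empirical average $\tfrac1n\sum_i\|\nabla_x\ell(f_w(x_i),y_i)\|_{q^*}$. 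The first step is to check the hypotheses of that theorem. For fixed $w$, the map $x\mapsto (y-w^\top x)^2$ is $C^\infty$ with gradient $-2(y-w^\top x)w$, which is locally Lipschitz. The support of $\Pemp$ is finite, so the required growth and regularity conditions hold trivially. The cost is restricted to $\|x-x'\|_q$, meaning labels and the constant intercept coordinate are not transported.

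The second step is the computation. We have
\[
\|\nabla_x \ell(f_w(x_i),y_i)\|_{q^*}=2\,|y_i-w^\top x_i|\,\|w\|_{q^*},
\]
by homogeneity of the norm. Averaging over $i$ gives $2\|w\|_{q^*}\cdot\tfrac1n\sum_i|y_i-w^\top x_i|$. Substituting this into the expansion yields the displayed identity. Both factors visibly depend on $w$, which is the final remark of the statement. With an intercept, $\|w\|_{q^*}$ refers only to the non-constant coordinates, because the constant feature is not perturbed.

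The main obstacle is the first-order term itself. For a $W_1$ ball the exact first-order coefficient is in general the essential supremum of the gradient norm over the support, i.e.\ $\max_i$ rather than the mean. The average arises from the instance-wise/surrogate form of the dual. I would therefore state explicitly that the expansion is taken in the form the paper adopts (Gao's gradient-norm surrogate, with the variation measured in the averaged dual norm). I would then verify this by a direct argument: perturb each atom $x_i$ along the maximizing direction of $\ell$, namely $\delta_i\propto \operatorname{sign}(r_i)$ times the dual-norm maximizer of $w$, with equal budget per atom so that the total transport cost is $\varrho$. This gives a lower bound matching the mean term up to $O(\varrho^2)$. A first-order Taylor bound on the Lagrangian dual supplies the corresponding upper bound within the same surrogate class. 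The $O(\varrho^2)$ contribution, namely $\varrho^2\|w\|_{q^*}^2$, is absorbed into $o(\varrho)$.
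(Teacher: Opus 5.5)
\textbf{Verdict: genuine gap.} Your evaluation of the gradient term, $\|\nabla_x\ell(f_w(x_i),y_i)\|_{q^*}=2|y_i-w^\top x_i|\,\|w\|_{q^*}$ followed by averaging, is exactly what the paper's proof does. The paper likewise just cites Gao's Theorem~1 in the averaged form. The gap is in that expansion step, and neither the paper nor your patch closes it.

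Your claim that the hypotheses ``hold trivially'' because $\Pemp$ has finite support is wrong. The adversary may transport mass anywhere in $\mathbb{R}^d$, so what matters is the global growth of $\ell$ in $x$, and the squared loss grows too fast for a $W_1$ ball. Concretely, pick $v$ with $\|v\|_q=1$ and $w^\top v=\|w\|_{q^*}$. Move mass $\epsilon\le 1/n$ of the atom $(x_1,y_1)$ to $(x_1+tv,\,y_1)$ with $t=\varrho/\epsilon$. The transport cost is $\epsilon t=\varrho$, and the risk increases by at least $\varrho^2\|w\|_{q^*}^2/\epsilon-2\varrho|r_1|\,\|w\|_{q^*}\to\infty$ as $\epsilon\to 0$. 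Hence $\sup_{W_1(Q,\Pemp)\le\varrho}\mathbb{E}_Q[\ell]=+\infty$ for every $\varrho>0$ whenever $\|w\|_{q^*}>0$. Equivalently, every inner supremum $\sup_{x'}\{\ell(x',y_i)-\lambda\|x'-x_i\|_q\}$ in the $W_1$ dual is $+\infty$, so no Taylor bound on that dual can give your upper bound. The displayed identity is therefore false for a $W_1$ ball, not merely hard to prove. As you correctly noted, even for Lipschitz losses the $W_1$ coefficient would be a maximum, not a mean.

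Your two-sided argument does not repair this. The lower bound is feasible and correct: move every atom a distance $\varrho$ in its worst direction, for total cost $\varrho$. But the ``upper bound within the same surrogate class'' only controls adversaries that move each atom by at most $\varrho$. That is the $W_\infty$ ball, i.e.\ the instance-wise problem that the PGD stage of the training procedure actually implements. For that ambiguity set the statement holds exactly and needs no appeal to Gao. By H\"older, $w^\top\delta$ ranges over $[-\varrho\|w\|_{q^*},\varrho\|w\|_{q^*}]$ as $\delta$ ranges over $\|\delta\|_q\le\varrho$, so
\[
\frac1n\sum_{i=1}^n\max_{\|\delta\|_q\le\varrho}\bigl(r_i-w^\top\delta\bigr)^2
=\frac1n\sum_{i=1}^n\bigl(|r_i|+\varrho\|w\|_{q^*}\bigr)^2
=R_{\mathrm{ERM}}(w)+2\varrho\|w\|_{q^*}\,\frac1n\sum_{i=1}^n|r_i|+\varrho^2\|w\|_{q^*}^2,
\]
with remainder exactly $\varrho^2\|w\|_{q^*}^2$. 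So what your argument proves is the proposition with $W_1$ replaced by $W_\infty$ (per-sample budget $\varrho$). The correct fix is to change that hypothesis, not to ``adopt the surrogate form'' of a $W_1$ expansion. For comparison, a $W_2$ ball gives the exact value $\bigl(\sqrt{R_{\mathrm{ERM}}(w)}+\varrho\|w\|_{q^*}\bigr)^2$. Its leading term is $2\varrho\|w\|_{q^*}\sqrt{R_{\mathrm{ERM}}(w)}$, not the mean absolute residual.
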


\begin{proof}
By Theorem 1 of \cite{gao}, for sufficiently small $\varrho$,
\[
R_{\mathrm{DRO}}(w)
\;=\;
\frac{1}{n}\sum_{i=1}^n \ell(h_w(x_i), y_i)
\;+\;
\varrho \cdot \frac{1}{n}\sum_{i=1}^n \|\nabla_x \ell(h_w(x_i), y_i)\|_{q^*}
\;+\;
o(\varrho).
\]
For the squared loss,
$\nabla_x \ell(h_w(x), y) = -2(y - w^\top x) w$, so
\[
\|\nabla_x \ell(h_w(x), y)\|_{q^*}
\;=\;
2\, |y - w^\top x|\, \|w\|_{q^*}.
\]
Since $\|w\|_{q^*}$ does not depend on the index $i$, it factors out of the sum:
\[
\varrho \cdot \frac{1}{n}\sum_{i=1}^n \|\nabla_x \ell(h_w(x_i), y_i)\|_{q^*}
\;=\;
2\varrho\, \|w\|_{q^*}\, \cdot\, \frac{1}{n}\sum_{i=1}^n |y_i - w^\top x_i|.
\]
The mean absolute residual $\tfrac{1}{n}\sum_i |y_i - w^\top x_i|$ is a 
$w$-dependent quantity and cannot be absorbed into a $w$-independent constant.
\end{proof}

\subsection{Classification}

We consider a supervised binary classification problem with data
$\{(x_i, y_i)\}_{i=1}^n$, $x_i \in \mathbb{R}^d$, $y_i \in \{-1, +1\}$.
We study linear classifiers $f_w(x) = w^\top x$ trained with the
squared loss $\ell(f_w(x), y) = (y - w^\top x)^2$, which corresponds
to least-squares classification (e.g.\ Fisher's LDA). Since $y_i \in \{-1, +1\}$ implies
$y_i^2 = 1$, the squared loss coincides with the squared-margin loss
$(1 - y_i w^\top x_i)^2$. The corresponding ERM objective is
\[
R_{\mathrm{ERM}}(w) = \frac{1}{n}\sum_{i=1}^n (y_i - w^\top x_i)^2.
\]
We assume linear models include an intercept, incorporated by
augmenting features with a constant.

\begin{proposition}[VRM]\label{prop:special_case_vrm_classification}
Under the vicinal distribution
$\tilde{x} = x + \epsilon$, $\epsilon \sim \mathcal{N}(0, \sigma^2 I)$,
\[
R_{\mathrm{VRM}}(w) = R_{\mathrm{ERM}}(w) + \sigma^2 \|w\|^2.
\]
\end{proposition}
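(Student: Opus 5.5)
The plan is to reduce this statement directly to the regression computation of Proposition~\ref{prop:special_case_vrm_regression}. That argument never used any property of the targets beyond their being fixed real numbers independent of the noise. Here the labels $y_i \in \{-1,+1\}$ are deterministic given the sample, and the loss is the same squared loss $(y - w^\top x)^2$. So the identity carries over verbatim.

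Concretely, I would write
\[
R_{\mathrm{VRM}}(w) = \frac{1}{n}\sum_{i=1}^n \mathbb{E}_\epsilon\bigl[(y_i - w^\top x_i - w^\top\epsilon)^2\bigr].
\]
Expanding the square gives the ERM term $(y_i - w^\top x_i)^2$, a cross term $-2(y_i - w^\top x_i)\,w^\top\epsilon$, and a quadratic term $(w^\top\epsilon)^2$. The cross term vanishes in expectation because $\mathbb{E}[\epsilon]=0$ and $y_i - w^\top x_i$ is constant with respect to $\epsilon$. The quadratic term evaluates to
\[
w^\top \mathbb{E}[\epsilon\epsilon^\top]\, w = \sigma^2\|w\|^2.
\]
Averaging over $i$ then yields the claim.

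The only point needing care, and the one I would flag as the main obstacle, is the intercept convention. Features are augmented with a constant coordinate. If the noise $\epsilon \sim \mathcal{N}(0,\sigma^2 I)$ is applied to the full augmented vector, the penalty $\sigma^2\|w\|^2$ includes the bias weight, and the statement holds literally. If instead the constant coordinate is left unperturbed, which is the natural choice, then $\|w\|^2$ should be read as the norm of the non-intercept part of $w$. I would state that $w$ and $\epsilon$ live in the same (possibly augmented) space, so that $\|w\|$ refers to exactly the perturbed coordinates. Under either reading the computation above is unchanged.

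Finally, I would optionally remark that because $y_i^2=1$, the objective equals the squared-margin loss plus $\sigma^2\|w\|^2$. The minimizer is therefore ridge-regularized least-squares classification, in line with \cite{chapelleVicinalRiskMinimization2000}.
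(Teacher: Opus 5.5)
Your proposal is correct and matches the paper's proof, which simply notes that the regression argument of Proposition~\ref{prop:special_case_vrm_regression} never uses $y_i \in \{-1,+1\}$ and therefore applies verbatim; your explicit expansion is that same computation. Your intercept caveat is a valid clarification the paper leaves implicit: if the constant coordinate is not perturbed, $\|w\|^2$ covers only the perturbed coordinates.
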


\begin{proof}
The squared loss takes the same form as in the regression setting
(Appendix~D.1) with the constraint $y_i \in \{-1,+1\}$, which is not
used in the proof of
Proposition~\ref{prop:special_case_vrm_regression}. The same argument
applies verbatim.
\end{proof}

\begin{proposition}[Mixup]\label{prop:special_case_mixup_classification}
Let the Mixup samples be defined as $\tilde{x} = \lambda x_i + (1-\lambda) x_j$,
$\tilde{y} = \lambda y_i + (1-\lambda) y_j$ with $(i,j)$ drawn
independently and uniformly from $[n]$ and
$\lambda \sim \mathrm{Beta}(\alpha, \alpha)$. Let
$c_\alpha = \mathbb{E}[\lambda(1-\lambda)]$. Then
\[
R_{\mathrm{Mixup}}(w)
= (1 - 2c_\alpha)\, R_{\mathrm{ERM}}(w)
+ 2c_\alpha\, (\bar{y} - w^\top \bar{x})^2,
\]
or equivalently $R_{\mathrm{Mixup}}(w) - R_{\mathrm{ERM}}(w)
= -2c_\alpha\, \mathrm{Var}_n(y - w^\top x)$. With intercept,
$R_{\mathrm{Mixup}}$ and $R_{\mathrm{ERM}}$ share the same
minimizers.
\end{proposition}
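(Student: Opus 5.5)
The plan is to reduce the statement to Proposition~\ref{prop:special_case_mixup_regression}, whose proof never uses the real-valued nature of the targets beyond the algebra of squared residuals. As a first step, I will write the residuals $r_i = y_i - w^\top x_i$ and observe that, because the Mixup map is affine in both inputs and labels, $\tilde y - w^\top \tilde x = \lambda r_i + (1-\lambda) r_j$. This holds irrespective of whether $y_i \in \{-1,+1\}$. The mixed labels $\tilde y$ are no longer in $\{-1,+1\}$, but the squared loss is still well defined on them, so nothing breaks.

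Next, I will invoke the identity $(\lambda a + (1-\lambda)b)^2 = \lambda a^2 + (1-\lambda)b^2 - \lambda(1-\lambda)(a-b)^2$ and take expectations. For $\mathrm{Beta}(\alpha,\alpha)$ we have $\mathbb{E}\lambda = 1/2$, and $i,j$ are independent and uniform. Together these give $R_{\mathrm{Mixup}} = R_{\mathrm{ERM}} - c_\alpha \mathbb{E}_{i,j}(r_i - r_j)^2$. I will then use $\mathbb{E}_{i,j}(r_i-r_j)^2 = 2\,\mathrm{Var}_n(r) = 2R_{\mathrm{ERM}}(w) - 2(\bar y - w^\top \bar x)^2$. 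This yields both displayed forms at once: the first by substitution, and the second directly as $R_{\mathrm{Mixup}} - R_{\mathrm{ERM}} = -2c_\alpha \mathrm{Var}_n(y - w^\top x)$.

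For the claim about shared minimizers, I will use the intercept $b$. It enters only through $\bar r$, and the first-order condition for $R_{\mathrm{ERM}}$ in $b$ is exactly $\bar r = 0$. Hence any ERM minimizer also makes the second term, which is nonnegative, equal to zero. I also need $1-2c_\alpha > 0$. This holds because $\lambda(1-\lambda) \le 1/4$, with equality only when $\lambda = 1/2$, so $c_\alpha < 1/4 < 1/2$. Therefore every ERM minimizer minimizes both nonnegative-weighted terms simultaneously and so minimizes $R_{\mathrm{Mixup}}$.

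For the converse, I will argue as follows. A Mixup minimizer attains the value $(1-2c_\alpha)\min R_{\mathrm{ERM}}$. This forces $R_{\mathrm{ERM}}$ to be minimal, since both terms are then individually at their infima. The two sets of minimizers therefore coincide, and no non-degeneracy assumption is needed. The only delicate point is this last bookkeeping: I must check that the Mixup minimum value equals $(1-2c_\alpha)\min R_{\mathrm{ERM}}$, which is what makes the two-term argument force equality of the minimizer sets.
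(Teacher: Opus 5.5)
Your proposal is correct and follows the paper's proof essentially step for step: you reduce to the regression case via the residuals $r_i$, apply the convex-combination identity, use $\mathbb{E}_{i,j}(r_i-r_j)^2 = 2\,\mathrm{Var}_n(r)$, and use the intercept's first-order condition $\bar r = 0$ to show that every ERM minimizer zeroes the second term. Your one refinement is the optimal-value argument, using $1-2c_\alpha>0$, that every Mixup minimizer is an ERM minimizer; this gives equality of the full minimizer sets without the non-degeneracy/uniqueness assumption the paper relies on, and your bound $c_\alpha<1/4$ is sharper than the paper's $c_\alpha<1/2$, though either suffices.
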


\begin{proof}
Under the Mixup convex combination, $\tilde y \in [-1, +1]$ even when
$y_i, y_j \in \{-1, +1\}$. The squared-loss form
$(\tilde y - w^\top \tilde x)^2 = (\lambda r_i + (1-\lambda) r_j)^2$
remains valid because $\tilde y$ enters only through the residual
$r_i = y_i - w^\top x_i$, and the algebraic identity used in the proof
of Proposition~\ref{prop:special_case_mixup_regression} does not
depend on the range of $y_i$. The same argument applies verbatim.
\end{proof}

\begin{proposition}[W-DRO]\label{prop:special_case_dro_classification}
Let $\varrho$ be the radius of the Wasserstein-$1$ ambiguity set with
transport cost $\|x - x'\|_q$, and let $q^*$ denote the conjugate
exponent. For sufficiently small $\varrho$,
\[
R_{\mathrm{DRO}}(w) = R_{\mathrm{ERM}}(w)
+ 2\varrho\, \|w\|_{q^*}\cdot \frac{1}{n}\sum_{i=1}^n |y_i - w^\top x_i|
+ o(\varrho).
\]
\end{proposition}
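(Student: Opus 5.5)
The plan is to reduce the statement to Proposition~\ref{prop:special_case_dro_regression}, exactly as the VRM and Mixup classification results were reduced to their regression counterparts. The squared loss $(y - w^\top x)^2$ has the same algebraic form whether $y\in\mathbb{R}$ or $y\in\{-1,+1\}$. The Wasserstein-$1$ ambiguity set, whose transport cost $\|x-x'\|_q$ acts only on the input, is also the same in both settings. Hence the classification objective $R_{\mathrm{DRO}}(w)$ is literally the regression objective evaluated on a dataset whose labels happen to lie in $\{-1,+1\}$. The labels stay fixed under transport, either by an infinite cost on label changes or because the cost only involves $x$, so the binary label set imposes no new constraint on the perturbed distributions.

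Concretely, I would first invoke the small-radius expansion of \cite[Theorem 1]{gao}:
\[
R_{\mathrm{DRO}}(w) = R_{\mathrm{ERM}}(w) + \varrho\,\frac{1}{n}\sum_{i=1}^n \|\nabla_x \ell(h_w(x_i),y_i)\|_{q^*} + o(\varrho).
\]
This expansion requires only differentiability of $\ell$ in $x$ and mild growth and regularity conditions. The squared loss is a quadratic polynomial in $x$, so it is smooth, and these conditions hold regardless of the label range. Next I would compute $\nabla_x (y-w^\top x)^2 = -2(y-w^\top x)\,w$. By homogeneity of the dual norm, this gives $\|\nabla_x\ell\|_{q^*} = 2|y_i - w^\top x_i|\,\|w\|_{q^*}$. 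Factoring $\|w\|_{q^*}$ out of the empirical average then yields the stated form. As an optional remark I would note that $y_i^2=1$ gives $|y_i - w^\top x_i| = |1 - y_i w^\top x_i|$, so the regularizer is $\|w\|_{q^*}$ times the mean absolute margin deficit.

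The only delicate point is the intercept. The features are augmented with a constant coordinate, and that coordinate must not be perturbed. I would handle this by taking the transport cost on the non-constant features only. Then the gradient $\nabla_x$ involves only the non-intercept part of $w$, and $\|w\|_{q^*}$ should be read as the norm of that part; the formula is otherwise unchanged. A second check is that the $o(\varrho)$ remainder in Gao's theorem is valid for squared loss, which has quadratic rather than linear growth. I would state it under the same hypotheses used in Proposition~\ref{prop:special_case_dro_regression}, namely a finite growth rate, which holds for a $W_p$ ball with $p\ge 2$. Alternatively, I would simply inherit these hypotheses verbatim from that proposition, so that they are identical to the regression case.
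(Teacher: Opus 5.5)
Your argument follows the paper's proof: reduce to the regression case, apply the small-radius expansion of \cite[Theorem~1]{gao} with leading term $\varrho\,\frac{1}{n}\sum_i\|\nabla_x\ell(h_w(x_i),y_i)\|_{q^*}$, then use $\nabla_x\ell=-2(y-w^\top x)\,w$ and homogeneity of the dual norm. Your intercept convention is a sensible refinement the paper omits.

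There is a genuine gap, and the paper's proof has it too. It sits in the step you flagged and then resolved incorrectly. An expansion whose leading term is the \emph{arithmetic mean} of gradient norms is the $W_\infty$ case (instance-wise perturbations), not the $W_1$ case. For $W_1$ with squared loss on $\mathbb{R}^d$, the worst-case risk is not even finite. Write $r_i=y_i-w^\top x_i$ and take one atom $i$. Keep mass $(1-\eta)/n$ at $x_i$ and move mass $\eta/n$ to $x_i+tv$, where $\|v\|_q=1$ is chosen so that $|r_i-t\,w^\top v|=|r_i|+t\|w\|_{q^*}$. With $t=n\varrho/\eta$ the transport cost is exactly $\varrho$. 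The risk then increases by $2\varrho\|w\|_{q^*}|r_i|+n\varrho^2\|w\|_{q^*}^2/\eta$, which diverges as $\eta\to 0$. Hence $R_{\mathrm{DRO}}(w)=+\infty$ for every $\varrho>0$ and every $w$ with $\|w\|_{q^*}>0$, and no $o(\varrho)$ identity can hold. So the expansion cannot apply here, contrary to your claim that its hypotheses hold for any smooth quadratic loss. On a bounded domain the value is finite. However, $W_1$ lets the whole budget go to a single atom, so the first-order term becomes a maximum over atoms rather than an average.

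Your proposed patch, a $W_p$ ball with $p\ge 2$, restores finiteness but changes the conclusion. By H\"older, the leading term becomes $2\varrho\|w\|_{q^*}\bigl(\frac{1}{n}\sum_i|r_i|^{p'}\bigr)^{1/p'}$ with $p'=p/(p-1)$. For $p=2$ one has exactly $R_{\mathrm{DRO}}(w)=\bigl(\sqrt{R_{\mathrm{ERM}}(w)}+\varrho\|w\|_{q^*}\bigr)^2$. Its first-order term involves the root-mean-square residual, which is strictly larger than the mean absolute residual unless all $|r_i|$ coincide.

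The stated formula is correct exactly for the $W_\infty$ ball: each atom moves within $\|\delta_i\|_q\le\varrho$ with labels fixed. This is also what Stage~2 of the pipeline implements via PGD. In that setting you need no asymptotics. Since $w^\top\delta$ ranges over $[-\varrho\|w\|_{q^*},\varrho\|w\|_{q^*}]$, one gets $\sup_{\|\delta\|_q\le\varrho}(r_i-w^\top\delta)^2=(|r_i|+\varrho\|w\|_{q^*})^2$. Averaging over $i$ gives the claimed expansion with remainder exactly $\varrho^2\|w\|_{q^*}^2$.

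To make your proof correct, replace the $W_1$ ball by the $W_\infty$ ball. Then either use this direct computation or cite the $p=\infty$ case of Gao's expansion.
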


\begin{proof}
The gradient $\nabla_x \ell(f_w(x), y) = -2(y - w^\top x) w$ has the
same form for $y \in \{-1,+1\}$ as for $y \in \mathbb{R}$, so the
proof of Proposition~\ref{prop:special_case_dro_regression} applies
verbatim.
\end{proof}

\begin{proposition}[Label Smoothing under squared loss]
\label{prop:special_case_ls_classification}
For binary classification with $y_i \in \{-1, +1\}$, let label smoothing
replace each label by $\tilde y_i = (1-\alpha)\, y_i$ for some
$\alpha \in [0, 1]$. Under the squared loss,
\[
R_{\mathrm{LS}}(w)
\;=\; (1-\alpha)^2\, R_{\mathrm{ERM}}\!\big(w/(1-\alpha)\big),
\]
and consequently the minimizer satisfies
$w_{\mathrm{LS}}^{\star} = (1-\alpha)\, w_{\mathrm{ERM}}^{\star}$.
\end{proposition}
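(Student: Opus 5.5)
The identity follows from pulling the factor $(1-\alpha)$ out of each residual. For every $i$ we have
\[
\tilde y_i - w^\top x_i = (1-\alpha)\,y_i - w^\top x_i = (1-\alpha)\Bigl(y_i - \bigl(w/(1-\alpha)\bigr)^\top x_i\Bigr),
\]
valid for $\alpha\in[0,1)$. Squaring and averaging over $i$ gives
\[
R_{\mathrm{LS}}(w)=\frac1n\sum_{i=1}^n(\tilde y_i-w^\top x_i)^2=(1-\alpha)^2 R_{\mathrm{ERM}}\bigl(w/(1-\alpha)\bigr).
\]
The intercept coordinate needs no separate treatment, because it is absorbed into $w$ via the constant feature and is rescaled along with the other coordinates.

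For the minimizer, consider the bijection $w\mapsto v=w/(1-\alpha)$ on $\mathbb{R}^d$. Since $(1-\alpha)^2>0$ is a constant, minimizing $R_{\mathrm{LS}}(w)$ over $w$ is equivalent to minimizing $R_{\mathrm{ERM}}(v)$ over $v$. Hence $w$ minimizes $R_{\mathrm{LS}}$ if and only if $w/(1-\alpha)$ minimizes $R_{\mathrm{ERM}}$. This gives $w^\star_{\mathrm{LS}}=(1-\alpha)w^\star_{\mathrm{ERM}}$, and it holds as an equality of argmin sets even when the minimizer is not unique.

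As a cross-check, the normal equations $X^\top X w = X^\top \tilde y = (1-\alpha)X^\top y$ are linear in the target vector, so scaling the targets by $(1-\alpha)$ scales the solution set by the same factor.

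The one point that needs care is the endpoint $\alpha=1$, where the substitution breaks down because of division by zero. There I would state separately that $\tilde y_i=0$ for all $i$, so $R_{\mathrm{LS}}(w)=\frac1n\sum_i (w^\top x_i)^2$. This is minimized at $w=0=(1-\alpha)w^\star_{\mathrm{ERM}}$, so the minimizer claim still holds. The functional identity itself should be read with the convention $0\cdot R_{\mathrm{ERM}}(\cdot/0)$ interpreted as this limit, or the identity should simply be restricted to $\alpha<1$.

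I would also add a short remark that the binary condition $y_i\in\{-1,+1\}$ is never used, so the result holds for any real-valued targets.
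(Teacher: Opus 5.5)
Your proof is correct and follows the paper's argument: you factor $(1-\alpha)$ out of each residual, substitute $u = w/(1-\alpha)$, and read off the minimizer. The paper does this through the stationarity condition $\nabla_u R_{\mathrm{ERM}}(u)=0$, while you use the bijection between argmin sets; your separate treatment of the endpoint $\alpha=1$, where the substitution divides by zero, covers a case the paper's proof omits.
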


\begin{proof}
Substituting $\tilde y_i = (1-\alpha) y_i$,
\[
R_{\mathrm{LS}}(w)
= \frac{1}{n}\sum_{i=1}^n \big((1-\alpha) y_i - w^\top x_i\big)^2.
\]
Factoring $(1-\alpha)$ from each term and writing $u = w/(1-\alpha)$,
\[
R_{\mathrm{LS}}(w)
= \frac{1}{n}\sum_{i=1}^n (1-\alpha)^2 \big(y_i - u^\top x_i\big)^2
= (1-\alpha)^2\, R_{\mathrm{ERM}}(u).
\]
The minimizer of $R_{\mathrm{LS}}$ in $w$ is therefore characterized by
$\nabla_u R_{\mathrm{ERM}}(u) = 0$ at $u = w/(1-\alpha)$, i.e.\
$w/(1-\alpha) = w_{\mathrm{ERM}}^{\star}$, giving
$w_{\mathrm{LS}}^{\star} = (1-\alpha)\, w_{\mathrm{ERM}}^{\star}$.
\end{proof}

\section{Additional Experimental Results}
\begin{figure}[t]
    \begin{subfigure}{0.32\linewidth}
        \centering
        \includegraphics[width=\linewidth]{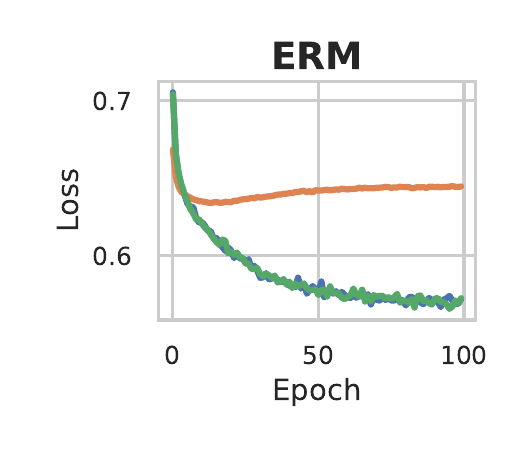}
    \end{subfigure}
    \hfill
    \begin{subfigure}{0.32\linewidth}
        \centering
\includegraphics[width=\linewidth]{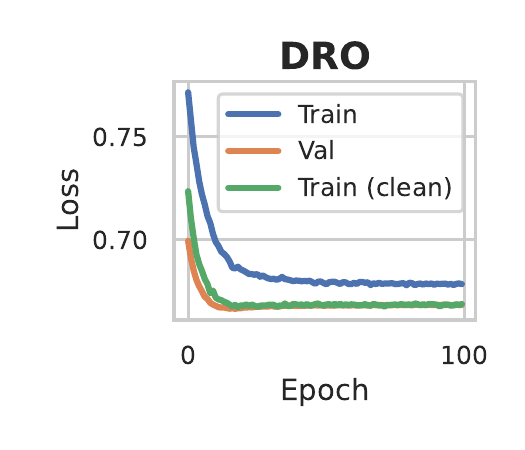}
    \end{subfigure}
    \hfill
    \begin{subfigure}{0.32\linewidth}
        \centering
       \includegraphics[width=\linewidth]{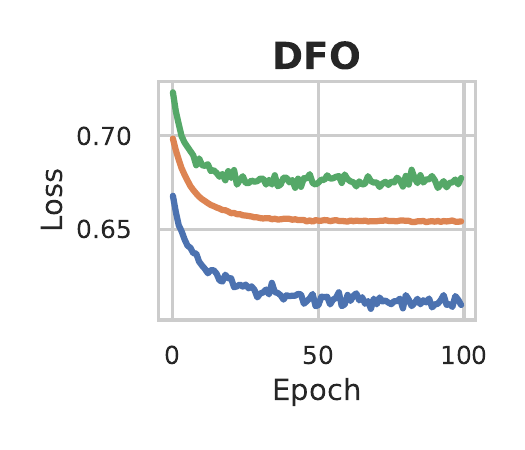}
    \end{subfigure}
    \caption{Training and validation learning curves for ERM, DRO, and DFO. 
ERM exhibits classical overfitting, with the validation loss diverging 
significantly from the training loss. DRO shows a similar but less 
pronounced effect. DFO does not overfit, with training and validation 
losses remaining closely aligned throughout training. For DRO and DFO, 
the clean training loss is also shown.}
    \label{fig:learning_curve}
    \vspace{1em}
\end{figure}

\paragraph{Learning curves.}
Figure~\ref{fig:learning_curve} illustrates how the choice of disambiguation principle 
affects training dynamics. Under ERM, the validation loss diverges from the training loss, 
exhibiting classical overfitting. DRO mitigates this effect by emphasizing worst-case 
perturbations, resulting in more conservative updates and a reduced generalization gap. 
In contrast, DFO shows little divergence between training and validation losses. 
This behavior arises because the optimistic aggregation selectively down-weights 
hard or potentially corrupted samples, effectively focusing the model on regions 
of the data that are easier to fit. While this can improve robustness to noise, 
it may also reduce sensitivity to challenging but informative examples.

\begin{figure*}[t]
    \centering
    \includegraphics[width=\textwidth]{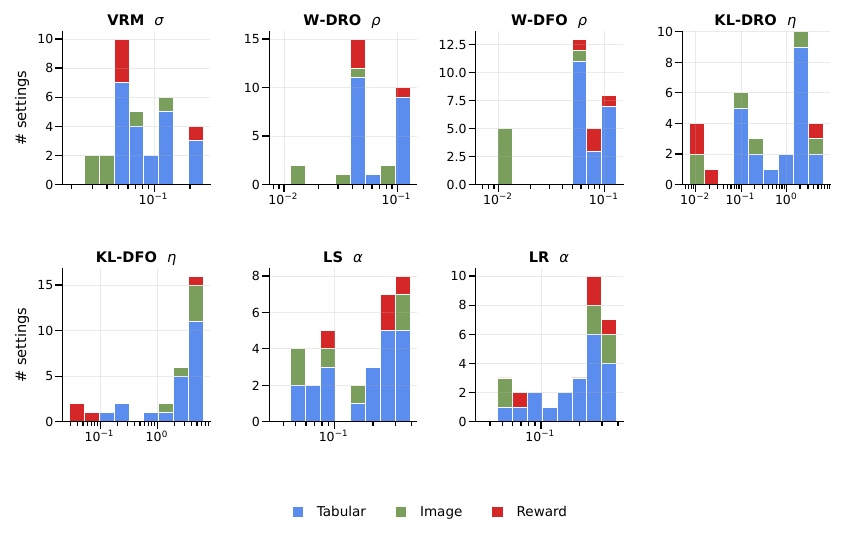}
    \caption{Single-axis baselines: distribution of best-trial
    hyperparameter magnitudes per method, stacked by modality. Each
    panel pools the selected value across all (dataset, metric)
    settings.}
    \label{fig:baseline_distributions}
\end{figure*}
\paragraph{Per-baseline HPO choices.}
For comparison, Figure~\ref{fig:baseline_choices} shows the same
analysis for each single-axis baseline. Since each baseline exposes a
single hyperparameter, the matrix has one column per method (with the
relevant symbol). Three patterns are noteworthy. First, KL-DFO
saturates at $\tau = 5$ (the upper search bound) on nearly every
setting, suggesting that the optimistic KL direction prefers the
maximum permitted temperature; this is consistent with KL-DFO's
relative weakness on tail risk. Second, W-DRO and W-DFO converge to
overlapping $\rho$ ranges, indicating that the two stances disagree on
direction more than on magnitude. Third, VRM $\sigma$ is the most
modality-stable choice ($\sim 0.05$--$0.15$ across the board), making
it a sensible default. Figure~\ref{fig:baseline_distributions}
summarizes these per-method distributions.

\begin{figure*}[t]
    \centering
    \includegraphics[width=0.92\linewidth]{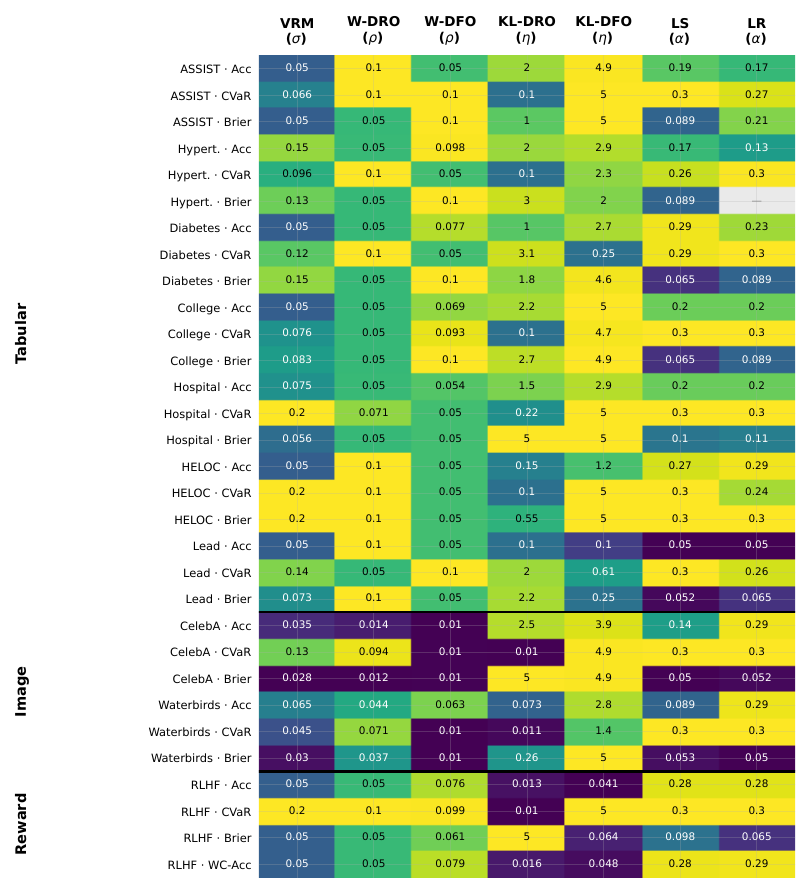}
    \caption{Single-axis baselines: hyperparameter values selected by
    HPO, one row per (dataset, metric) setting and one column per
    method (with the method's primary HP indicated). Cells are colored
    by log-magnitude within each column; "---" denotes a setting
    without a result for that method. KL-DFO consistently saturates at
    $\tau = 5$ (the upper search bound).}
    \label{fig:baseline_choices}
\end{figure*}%

%
%
%

\newpage
\clearpage

\end{document}